\theoremstyle{plain}
\newtheorem{theorem}{Theorem}[section]
\newtheorem{proposition}[theorem]{Proposition}
\theoremstyle{definition}
\theoremstyle{remark}
\title{Streaming data recovery 
\\via Bayesian tensor train decomposition
}
\author{
  Yunyu Huang, Yani Feng, Qifeng Liao\thanks{Corresponding author} \\
  School of Information Science and Technology \\
  ShanghaiTech University \\
  Shanghai, 201210, China\\
  \texttt{\{edenhyy@foxmail.com, fengyn@shanghaitech.edu.cn, liaoqf@shanghaitech.edu.cn\}} \\
}
\begin{document}
\maketitle

\begin{abstract}
In this paper, we study a Bayesian tensor train (TT) decomposition method to recover streaming data by approximating the latent structure in high-order streaming data. Drawing on the streaming variational Bayes method, we introduce the TT format into Bayesian tensor decomposition methods for streaming data, and formulate posteriors of TT cores. Thanks to the Bayesian framework of the TT format, the proposed algorithm (SPTT) excels in recovering streaming data with high-order, incomplete, and noisy properties. The experiments in synthetic and real-world datasets show the accuracy of our method compared to state-of-the-art Bayesian tensor decomposition methods for streaming data.
\end{abstract}

\keywords{Variantional inference \and  Tensor train decomposition \and Streaming tensor}

\section{Introduction}
\subsection{Motivation}
In recent decades, there has been tremendous interest in streaming data recovery across diverse domains such as recommendation systems~\cite{huang2015tencentrec}, sensor networks~\cite{hu2022streaming}, and social media analytics~\cite{zipkin2016point}. Streaming data, characterized by its continuous generation and real-time flow, is susceptible to corruption during transmission and storage, leading to considerable difficulties in data analysis. Tensor decomposition is an important tool to represent and recover tensor data by introducing its latent structure. In practical applications, the latent structure is discovered by minimizing the reconstruction error of noisy observed tensor elements and then can be used to predict missing elements.

Effective numerical techniques, such as CANDECOMP/PARAFAC (CP) decomposition \cite{hars1, bro1997parafac,TANG2020109326} Tucker decomposition \cite{tucker1966some,xiao2021efficient}, and tensor train (TT) decomposition \cite{oseledets2011tensor,feng2020tensor}, are frequently used for tensor decomposition and have been proposed to compress full tensors to obtain their latent representations, i.e., the latent structures. Although these techniques have achieved great success, they require processing on static data, meaning that the entire observed data is utilized at each iteration step. However, in practical scenarios such as log files from web application~\cite{roy2020survey} and information from social networks~\cite{koutra2012tensorsplat}, only partial observed elements are generated at each time step, and previously produced observed elements cannot be retrieved due to the stress on database capacity and privacy. Consequently, the techniques above will be unsuitable, prompting the necessity to update the latent structure online based on the current observed elements, and recover missing elements using the latent structure. 

\subsection{Related Work}
Using CP decomposition to recover streaming data originated in~\cite{nion2009adaptive}, while subsequent enhancements are proposed by \cite{minh2016adaptive,nguyen2016fast,nguyen2017second}. To analyze streaming data with multi-index structures commonly encountered in real scenarios, successive CP decomposition based algorithms like MAST \cite{song2017multi}, OR-MSTC \cite{najafi2019outlier}, InParTen2~\cite{yang2020multi}, and DisMASTD~\cite{yang2023dismastd} have emerged. In \cite{sun2006beyond,sun2008incremental}, a dynamic tensor analysis (DTA) algorithm is proposed to approximate latent structure of tucker decomposition. The Rapid Incremental Singular Value Decomposition (ISVD), introduced by \cite{li2007robust,hu2011incremental}, streamlines the solution process within the DTA framework.

Meanwhile, Bayesian inference offers a promising approach for tackling the challenge of tensor decomposition in streaming data recovery. The forefront Bayesian tensor decomposition algorithms for streaming data include POST~\cite{du2018probabilistic} and BASS-Tucker~\cite{fang2021bayesian}, grounded in the Bayesian formulations of CP and Tucker decomposition, respectively. While both POST and BASS-Tucker serve as robust streaming decomposition methods adaptable to various applications, two key challenges in modeling and computation for estimating the latent structure warrant attention. Firstly, the latent structures (CP and Tucker formats) in POST and BASS-Tucker struggle to effectively balance representation power and computational efficiency for high-order tensors. Secondly, in Bayesian tensor decomposition algorithms for streaming data, the design of a reliable prior for the latent structure is crucial for successful estimation.
\subsection{Contributions}
We introduce SPTT, a Streaming Probabilistic Tensor Train decomposition approach, for recovering streaming data by estimating the posterior of TT format. The contributions of our work are threefold:
\begin{itemize}
\item First, SPTT is founded on TT decomposition, combining the benefits of CP and Tucker decomposition. This stems from its provision of a space-saving model known as TT format while retaining strong representation capabilities.

\item Second, inspired by the Bayesian TT representation for static data in~\cite{9513808}, we devise a robust Gaussian prior of TT-cores tailored for streaming data. We employ the streaming variational Bayes (SVB) method on the Gaussian prior for streaming data analysis.

\item Third, both synthetic and real-world data are used to demonstrate the accuracy of our algorithm.
\end{itemize}

\section{Preliminaries}
\label{sec:ba}
Throughout the paper, we denote scalar, vector, matrix and tensor variables, respectively, by lowercase letters (e.g. $x$), boldface lowercase letters (e.g. $\mathbf{x}$), boldface capital letters (e.g. $\mathbf{X}$), and boldface Euler script letters (e.g. $\pmb{\mathscr{X}}$). For a $D$-th order tensor $\pmb{\mathscr{X}}\in \mathbb{R}^{N_{1} \times \cdots \times N_{D}}$, the $\mathbf{j}:=(j_{1},\cdots, j_{D})$-th element of $\pmb{\mathscr{X}}$ is defined by $x_{\mathbf{j}}$, where $j_{d} = 1,\ldots,N_{d}$ for $ d=1,\ldots,D$. 
\subsection{Tensor Train Decomposition}

Tensor decomposition introduces a latent structure to represent the each element of tensors. We aim to infer the latent structure from incomplete noisy observed tensors. Tensor train (TT) decomposition has a great representation ability, especially for high-order tensors. Given a $D$-th order tensor $\pmb{\mathscr{X}}$, the tensor train decomposition \cite{oseledets2011tensor} is
\begin{equation}\label{TTdecom}
    x_{\mathbf{j}} \approx \prod_{d=1}^{D} \pmb{\mathscr{G}}^{(d)}_{j_{d}},    
\end{equation}
where 
$\pmb{\mathscr{G}}^{(d)}_{j_{d}}\in \mathbb{R}^{r_{d-1}\times r_{d}}$ is the $j_{d}$-th slice of TT-cores $\pmb{\mathscr{G}}^{(d)} \in \mathbb{R}^{r_{d-1}\times N_{d}\times r_{d}}$, for $j_{d} = 1,\ldots,N_{d}$, $d = 1,\ldots,D$, and the ``boundary condition'' is $r_0 = r_D = 1$. The vector $[r_0,r_1,r_2, \ldots, r_D]$ is referred to as TT-ranks. The element-wise form \eqref{TTdecom} is considered to be in the TT-format shown in Figure \ref{ttformat_fig}. In other words, TT decomposition is to decompose a $D$-th order tensor $\pmb{\mathscr{X}}$ into a
sequence of three-way factor tensors. For convenience, a more compact form is given by
\begin{equation}
    \pmb{\mathscr{X}} \approx \langle\langle \pmb{\mathscr{G}}^{(1)},\ldots,\pmb{\mathscr{G}}^{(D)} \rangle\rangle.
\end{equation}
where $\langle\langle \cdot \rangle\rangle$ is a shorthand of multi-linear product \eqref{TTdecom} in TT decomposition. To infer the latent structure $\pmb{\mathscr{G}}=\{\pmb{\mathscr{G}}^{(d)}\}_{d=1}^{D}$ from noisy observed elements, one can minimize the following reconstruction error,
\begin{equation}\label{ttloss}
   \mathcal{L}(\pmb{\mathscr{G}}) =  \sum_{\mathbf{j}\in \Omega}\| x_{\mathbf{j}} -\prod_{d=1}^{D} \pmb{\mathscr{G}}^{(d)}_{j_{d}}\|^{2},
\end{equation}
where $\Omega$ represents the index set of the whole observed data. The alternating least squares algorithm~\cite{wang2016tensor} is commonly used to update each TT core alternatively, given all the other fixed.  

\begin{figure}[ht]
\vskip 0.2in
\begin{center}
\centerline{\includegraphics[width=\columnwidth]{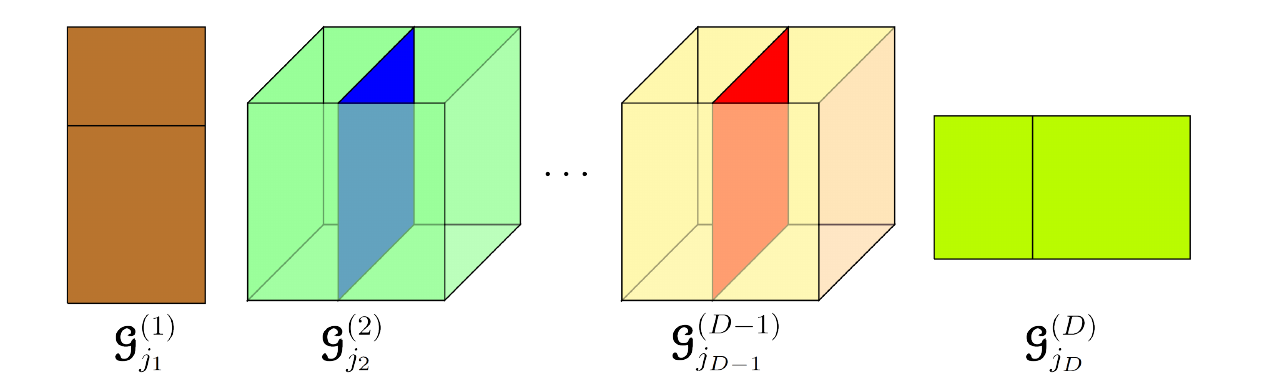}}
\caption{TT decomposition for an element $x_{\mathbf{j}}$ (TT-format).}
\label{ttformat_fig}
\end{center}
\vskip -0.2in
\end{figure}

\subsection{Streaming Tensor Train Decomposition Problem Setting}\label{setting}
The problem \eqref{ttloss} need to conduct on static data, which implies that at each iteration step, one need to have access to the whole observed data. However, in practical application the observed data is generated dynamically. Due to the stress on database capacity and privacy, each time one only have access to the current observed data and cannot revisit the previously generated observed data. Moreover, traditional TT decomposition methods like~\cite{wang2016tensor, YUAN201953} use point estimation \eqref{ttloss} to estimate TT-cores and is not capable of evaluating uncertainties, which can vary among each slice in TT-cores. To address the these issues, a new approach is developed for solving the streaming tensor decomposition problem and also estimating uncertainties. 

Our goal is to infer TT-cores $\{\pmb{\mathscr{G}}^{(d)}\}_{d=1}^{D}$ from streaming data $\{B_{1}, B_{2},\ldots\}\subset B$, where $B_{t} = \{x_{\mathbf{j}}|\mathbf{j}\in\Omega_{t}\}$ and $\Omega_{t}$ denotes its index set, but we can only visit data batch $B_{t}$ at time $t$, instead of revisiting previous data $\Tilde{B} = \{B_{1},\ldots, B_{t-1}\}$.
The streaming variational Bayes (SVB)~\cite{broderick2013streaming} method provides a Bayesian framework for the streaming TT decomposition problem, where uncertainties can be taken into consideration.
Thus, we aim to find the posterior distribution of latent parameters $\mathbf{\theta}$ including TT-cores from recursively received data. That is, to estimate the density function $p(\mathbf{\theta}|\tilde{B},B_{t})$ with the current likelihood $p({B}_{t}|\mathbf{\theta})$ and updated density function $p(\mathbf{\theta}|\tilde{B})$ using the Bayes' rule
as follows,
\begin{equation}\label{eq:stream}
	p(\mathbf{\theta}|\Tilde{B}, B_{t}) \propto p(\mathbf{\theta}|\Tilde{B})p(B_{t}|\mathbf{\theta}).
\end{equation}

In \cref{bayesianTT}, we will introduce a new probabilistic TT model, where the latent parameters is considered as random variables. Applying the SVB method, the recursively updating process \eqref{eq:stream} based on the probabilistic TT model can be achieved in \cref{BSI}.

\section{Bayesian model for streaming tensor train decomposition}\label{sec:sptt}
In this section, we first develop the probabilistic model of TT decomposition, where the Gaussian noise is included. After that, the posterior inference for TT-cores is conducted by the SVB method.
Finally, we present the streaming probabilistic tensor train decomposition (SPTT) algorithm.
\subsection{Probabilistic Modeling of Tensor Train Decomposition}\label{bayesianTT}
 Inspired by a Bayesian TT representation for static data in~\cite{9513808}, we introduce a Bayesian TT model for streaming data. We first specify $ \{\pmb{\mathscr{G}}^{(d)}\}_{d=1}^{D}$ from a Gaussian prior,
\begin{equation}
\label{eq:tt_prior}
p(\{\pmb{\mathscr{G}}^{(d)}\}_{d=1}^{D}) = \prod_{d=1}^{D}\prod_{k=1}^{r_{d-1}}\prod_{l=1}^{r_{d}}\mathcal{N}\left(\pmb{\mathscr{G}}^{(d)}_{k,:,l}\bigg|\mathbf{m}_{\pmb{\mathscr{G}}_{k,:,l}^{(d)}},v\mathbf{I}\right),
\end{equation}
 which indicates that each fiber  $\pmb{\mathscr{G}}^{(d)}_{k,:,l}$ admits a Gaussian distribution with mean $\mathbf{m}_{\pmb{\mathscr{G}}_{k,:,l}^{(d)}}$, and covariance matrix $v\mathbf{I}$. $v$ is a scalar and controls the flatness of the Gaussian prior. In streaming TT decomposition problem, the prior in \eqref{eq:tt_prior} is more reliable than the prior in~\cite{9513808}, because the latter can fail to estimate TT-cores through a small amount of data by inducing TT-cores with all zeros elements.
 
 Given TT-cores $ \{\pmb{\mathscr{G}}^{(d)}\}_{d=1}^{D}$, the likelihood of an observed element $x_{\mathbf{j}}$ is 
\begin{equation}\label{eq:likelihood}p(x_{\mathbf{j}}|\{\pmb{\mathscr{G}}^{(d)}\}_{d=1}^{D},\tau) = \mathcal{N}\left(x_{\mathbf{j}}\big|\prod_{d=1}^{D} \pmb{\mathscr{G}}^{(d)}_{j_{d}},\tau^{-1}\right),\end{equation}
where $\tau$ is the noise precision. Due to the non-negative and long tail, the Gamma distribution provides a good model for $\tau$. We assign a Gamma prior over $\tau$,
\begin{equation}
\label{eq:tau}p(\tau|\alpha_{0},\beta_{0}) = \operatorname{Gam}(\tau|\alpha_{0},\beta_{0}),
\end{equation}
where $\alpha_{0}$ and $\beta_{0}$ are hyperparameters  of the Gamma distribution. 
The joint probability is then given by combining \eqref{eq:tt_prior} with \eqref{eq:likelihood}-\eqref{eq:tau},
\begin{equation}\label{eq:joint}\begin{aligned}
p\left(x_{\mathbf{j}\in\Omega},\{\pmb{\mathscr{G}}^{(d)}\}_{d=1}^{D},\tau\right)& = \operatorname{Gam}(\tau|\alpha_{0},\beta_{0})\\     &\cdot \prod_{d=1}^{D}\prod_{k=1}^{r_{d-1}}\prod_{l=1}^{r_{d}} \mathcal{N}\left(\pmb{\mathscr{G}}^{(d)}_{k,:,l}\bigg|\mathbf{m}_{\pmb{\mathscr{G}}_{k,:,l}^{(d)}},v\mathbf{I}\right)\\     &\cdot\prod_{\mathbf{j}\in\Omega}\mathcal{N}\left(x_{\mathbf{j}}\big|\prod_{d=1}^{D} \pmb{\mathscr{G}}^{(d)}_{j_{d}},\tau^{-1}\right).
\end{aligned}\end{equation}

\subsection{Bayesian Streaming Inference}
\label{BSI}
We propose a streaming probabilistic TT decomposition approach (SPTT) for streaming data,
based on TT decomposition in section \ref{bayesianTT} and the streaming variational Bayes (SVB)~\cite{broderick2013streaming}. For streaming data $\{B_{1},B_{2},\ldots\}$ in \cref{setting}, our goal is to conduct the posterior inference for each parameter in $\mathbf{\theta} =\{ \{\pmb{\mathscr{G}}^{(d)}\}_{d=1}^{D}, \tau\}$ upon receiving each data batch $B_{t}$, without revisiting the previous batches $\Tilde{B} = \{B_{1},\ldots,B_{t-1}\}$.


The proportional relationship in \eqref{eq:stream} motivates us to conduct a recursive process of streaming inference for $\mathbf{\theta}$. However, the multilinear product of TT-cores $\{\pmb{\mathscr{G}}^{(d)}\}_{d=1}^{D}$ in \eqref{eq:likelihood} results in the posterior update intractable. To overcome the problem, we introduce a factorized variational posterior distribution
  $  q^{(t-1)}(\mathbf{\theta}) = \prod_{i} q^{(t-1)}_{\mathbf{\theta}_{i}}(\mathbf{\theta}_{i})$
 to approximate $p(\mathbf{\theta}|\Tilde{B})$ in the right hand-side of \eqref{eq:stream}, where $\mathbf{\theta}_{i}$ indicates a component in $\mathbf{\theta}$. With receiving new data batch $B_{t}$, we can obtain a new distribution 
 \begin{equation}\label{eq:post_app}
 	  \Tilde{p}^{(t)}(\mathbf{\theta}) = q^{(t-1)}(\mathbf{\theta})p(B_{t}|\mathbf{\theta}),
 \end{equation}
which is the product of posterior after $(t-1)$-th data batch and the current likelihood of $B_{t}$ and can be regarded as an approximation of the joint distribution $p(\mathbf{\theta},\Tilde{B}, B_{t})$ up to a normalization constant. More details for computing $\log \Tilde{p}^{(t)}(\mathbf{\theta})$ can be found in Appendix \ref{appendix1}. Moreover, the current posterior $p(\mathbf{\theta}|\Tilde{B}, B_{t})$ in left hand-side of \eqref{eq:stream} can be approximated by $q^{(t)}(\mathbf{\theta})$, 
\begin{equation}\label{eq:mean_field1}
p(\mathbf{\theta}|\Tilde{B}, B_{t})\approx q^{(t)}(\mathbf{\theta}) = q^{(t)}_{\tau}(\tau)\prod_{d=1}^{D}\prod_{k=1}^{r_{d-1}}\prod_{l=1}^{r_{d}}\prod_{j_{d}}^{N_{d}}q^{(t)}_{\pmb{\mathscr{G}}^{(d)}_{k,j_{d},l}}\left(\pmb{\mathscr{G}}^{(d)}_{k,j_{d},l}\right).
\end{equation}
 based on the only assumption that each elements in TT-cores and the noise precision are independent. To be noticed, $q^{(t)}(\mathbf{\theta})$ is also in a factorized form as $q^{(t-1)}(\mathbf{\theta})$, and can be initialized with $q^{(0)}(\mathbf{\theta}) = p(\mathbf{\theta})$, where $p(\mathbf{\theta})$ is the prior distribution $\mathbf{\theta}$ and equals to the product of \eqref{eq:tt_prior} and \eqref{eq:tau}. From the variational inference framework~\cite{wainwright2008graphical}, $q^{(t)}(\mathbf{\theta})$ is obtained by minimizing the Kullback-Leibler (KL) divergence between $\frac{1}{C}\Tilde{p}(\mathbf{\theta})$ and $q^{(t)}(\mathbf{\theta})$, i.e. $\operatorname{KL}(\frac{1}{C}\Tilde{p}^{(t)}(\mathbf{\theta}||q^{(t)}(\mathbf{\theta}))$, where $C$ is a normalization constant. This is equivalent to maximize a variational model evidence lower bound (ELBO)~\cite{wainwright2008graphical},
\begin{equation}\label{eq:elbo}
    \mathcal{L}(q^{(t)}(\mathbf{\theta})) = \mathbb{E}_{q^{(t)}}\left[\log\left(\frac{\tilde{p}^{(t)}(\mathbf{\theta})}{q^{(t)}(\mathbf{\theta})}\right)\right].
\end{equation}
Due to the factorized form \eqref{eq:mean_field1}, the closed form of the individual factors $q^{(t)}_{\mathbf{\theta}_{i}
}(\mathbf{\theta}_{i})$ can be explicitly derived.
To be specific, $q^{(t)}_{\mathbf{\theta}_{i}
}(\mathbf{\theta}_{i})$
based on the maximization of $\mathcal{L}(q^{(t)}(\mathbf{\theta}))$ is then given by
\begin{equation}\label{fac}
    \log q^{(t)}_{\mathbf{\theta}_{i}}(\mathbf{\theta}_{i}) = \mathbb{E}_{q^{(t)}(\theta  \setminus \mathbf{\theta}_{i})}\left[\log \tilde{p}^{(t)}(\mathbf{\theta})\right] + \operatorname{const},
\end{equation}
where $\mathbb{E}_{q^{(t)}(\theta  \setminus \mathbf{\theta}_{i})}$ denotes an expectation w.r.t the $q^{(t)}(\mathbf{\theta})$ distributions over all variables $\theta$ except $\mathbf{\theta}_{i}$. 

\subsubsection{Posterior Distribution of TT-cores}\label{subsec:ptt}
Upon the factorized form (\ref{eq:mean_field1}), the inference of an element of TT-cores $\pmb{\mathscr{G}}_{k,j_{d},l}^{(d)}$ can be performed by receiving the messages from the $t$-th data batch $B_{t}$ and the updated information of the whole TT-cores, where $k=1,\ldots,r_{d-1}$, $l =1,\ldots r_{d}$, $j_{d} = 1,\ldots,N_{d}$. By applying (\ref{fac}), 
the optimized form of $\pmb{\mathscr{G}}_{k,j_{d},l}^{(d)}$ is given as follows,
\begin{equation}\label{eq:uptt}
\log q^{(t)}_{\pmb{\mathscr{G}}_{k,j_{d},l}^{(d)}}(\pmb{\mathscr{G}}_{k,j_{d},l}^{(d)}) = \mathbb{E}_{q^{(t)}(\mathbf{\theta}\setminus \pmb{\mathscr{G}}_{k,j_{d},l}^{(d)})}\left[\log \tilde{p}^{(t)}(\mathbf{\theta})\right]+\operatorname{const}.
\end{equation}
Obviously, the approximation posterior keeps the same form as the updated one, i.e., Gaussian distribution,
\begin{equation}
    \label{eq:ttpos}
    q^{(t)}_{\pmb{\mathscr{G}}_{k,j_{d},l}^{(d)}}(\pmb{\mathscr{G}}_{k,j_{d},l}^{(d)}) = \mathcal{N}(\pmb{\mathscr{G}}_{k,j_{d},l}^{(d)}|\mu_{\pmb{\mathscr{G}}_{k,j_{d},l}^{(d)}}^{(t)},\nu_{\pmb{\mathscr{G}}_{k,j_{d},l}^{(d)}}^{(t)}).
\end{equation}
where $(t)$ represents receiving the $t$-th data batch $B_t$.
Here the variance and mean
can be updated by

\begin{equation}\label{core-variance}
\begin{split}
    \nu_{\pmb{\mathscr{G}}_{k,j_{d},l}^{(d)}}^{(t)} &= \Bigg({\nu_{\pmb{\mathscr{G}}_{k,j_{d},l}^{(d)}}^{(t-1)}}^{-1}+\mathbb{E}[\tau]\cdot\sum_{\mathbf{j}\in\Omega_{t}^{j_{d}}}\mathbb{E}\left[b^{(<d)}_{(k-1)r_{d-1}+k}\right]\mathbb{E}\left[b^{(>d)}_{(l-1)r_{d}+l}\right]\Bigg)^{-1},
\end{split}
\end{equation}
\begin{equation}\label{core-mean}
\begin{aligned}
&\mu_{\pmb{\mathscr{G}}_{k,j_{d},l}^{(d)}}^{(t)} = \nu_{\pmb{\mathscr{G}}_{k,j_{d},l}^{(d)}}^{(t)}\mu_{\pmb{\mathscr{G}}_{k,j_{d},l}^{(d)}}^{(t-1)}{\nu_{\pmb{\mathscr{G}}_{k,j_{d},l}^{(d)}}^{(t-1)}}^{-1}+\nu_{\pmb{\mathscr{G}}_{k,j_{d},l}^{(d)}}^{(t)}\mathbb{E}[\tau]\sum_{\mathbf{j}\in\Omega_{t}^{j_{d}}}\Bigg(x_{\mathbf{j}}\mathbb{E}\left[e^{(<d)}_{k}\right]\mathbb{E}\left[e^{(>d)}_{l}\right]\\
&-\sum_{k^{\prime}=1}^{r_{d-1}}\sum_{l^{\prime}=1}^{r_{d}} \mathbb{E}\left[b^{(<d)}_{(k-1)r_{d-1}+k^{\prime}}\right]\mathbb{E}\left[b^{(>d)}_{(l-1)r_{d}+l^{\prime}}\right]\mu_{\pmb{\mathscr{G}}^{(d)}_{k^{\prime},j_{d},l^{\prime}}}^{(t-1)}+ \mathbb{E}\left[b^{(<d)}_{(k-1)r_{d-1}+k}\right]\mathbb{E}\left[b^{(>d)}_{(l-1)r_{d}+l}\right]\mu_{\pmb{\mathscr{G}}^{(d)}_{k,j_{d},l}}^{(t-1)}\Bigg),\\
\end{aligned}
\end{equation}
with
\begin{equation}
\label{eq:td}
    \mathbb{E}\left[e^{(<d)}\right] = \prod_{i=1}^{d-1} \mathbb{E}\left[\pmb{\mathscr{G}}^{(i)}_{j_{i}}\right],\quad \mathbb{E}\left[e^{(>d)}\right] = \prod_{i=d+1}^{D} \mathbb{E}\left[\pmb{\mathscr{G}}^{(i)}_{j_{i}}\right],
\end{equation}
\begin{equation}
\label{eq:bd}
\begin{split}
    \mathbb{E}\left[b^{(<d)}\right] &= \prod_{i=1}^{d-1}\mathbb{E}\left[\pmb{\mathscr{G}}^{(i)}_{j_{i}}\otimes \pmb{\mathscr{G}}^{(i)}_{j_{i}}\right],\\
    \mathbb{E}\left[b^{(>d)}\right] &= \prod_{i=d+1}^{D}\mathbb{E}\left[\pmb{\mathscr{G}}^{(i)}_{j_{i}}\otimes \pmb{\mathscr{G}}^{(i)}_{j_{i}}\right],
\end{split}
\end{equation}
where $\nu_{\pmb{\mathscr{G}}_{k,j_{d},l}^{(d)}}^{(t)}$ and $\mu_{\pmb{\mathscr{G}}_{k,j_{d},l}^{(d)}}^{(t)}$ are the variance and mean of the element $\pmb{\mathscr{G}}_{k,j_{d},l}^{(d)}$ at the $t$-th data batch. The subscript $\Omega_{t}^{j_{d}}$ represents the indexes of $t$-th batch data with $d$-th index fixing as $j_{d}$. 
The expression $\mathbb{E}[\cdot]$ denotes the expectation with respect to all random variables involved. From (\ref{eq:td}) and (\ref{eq:bd}), we can see that, for $d = 1,\ldots, D$, $e^{(<d)}$ and $b^{(<d)}$ are row vectors, while $e^{(>d)}$ and $b^{(>d)}$ are column vectors. The subscript of them denotes an element of a vector, e.g. $e^{(<d)}_{k}$ is the $k$-th element of vector $e^{(<d)}$. The derivations of \eqref{core-variance} and \eqref{core-mean} are illustrated in Appendix \ref{appendix2}. 

The variance and mean of TT-cores can be initialized by \eqref{eq:tt_prior}, i.e., $q^{(0)}_{\pmb{\mathscr{G}}}(\pmb{\mathscr{G}})= p(\pmb{\mathscr{G}})$.  Moreover, we introduce the following proposition from~\cite{9513808} to compute $\mathbb{E}\left[\pmb{\mathscr{G}}^{(i)}_{j_{i}}\otimes \pmb{\mathscr{G}}^{(i)}_{j_{i}}\right]$ in (\ref{eq:bd}).
\begin{proposition}\label{pro2}
The expectation of Kronecker product $\pmb{\mathscr{G}}^{(i)}_{j_{i}}\otimes \pmb{\mathscr{G}}^{(i)}_{j_{i}}$ in (\ref{eq:bd}) can be calculated by 
\begin{equation*}
    \begin{split}
        \mathbb{E}\left[\pmb{\mathscr{G}}^{(i)}_{j_{i}}\otimes \pmb{\mathscr{G}}^{(i)}_{j_{i}}\right] =  &\mathbb{E}\left[\pmb{\mathscr{G}}^{(i)}_{j_{i}}\right] \otimes \mathbb{E}\left[\pmb{\mathscr{G}}^{(i)}_{j_{i}}\right] + \mathbb{E}\left[\left(\pmb{\mathscr{G}}^{(i)}_{j_{i}}-\mathbb{E}\left[\pmb{\mathscr{G}}^{(i)}_{j_{i}}\right]\right)\otimes \left(\pmb{\mathscr{G}}^{(i)}_{j_{i}}-\mathbb{E}\left[\pmb{\mathscr{G}}^{(i)}_{j_{i}}\right]\right)\right],
    \end{split}
\end{equation*}
\end{proposition}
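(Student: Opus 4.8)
The plan is to derive the identity from nothing more than the bilinearity of the Kronecker product and the linearity of expectation; it is simply the matrix analogue of the scalar decomposition $\mathbb{E}[X^{2}] = (\mathbb{E}[X])^{2} + \operatorname{Var}(X)$. For brevity write $\mathbf{G} := \pmb{\mathscr{G}}^{(i)}_{j_{i}} \in \mathbb{R}^{r_{i-1}\times r_{i}}$ and $\bar{\mathbf{G}} := \mathbb{E}[\pmb{\mathscr{G}}^{(i)}_{j_{i}}]$, so that $\bar{\mathbf{G}}$ is a deterministic matrix of the same size.

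First I would expand the centered product using bilinearity of $\otimes$ in each argument, which is a purely algebraic identity valid entrywise since every entry of $\mathbf{A}\otimes\mathbf{B}$ is a bilinear monomial in the entries of $\mathbf{A}$ and $\mathbf{B}$:
\[
(\mathbf{G}-\bar{\mathbf{G}})\otimes(\mathbf{G}-\bar{\mathbf{G}}) = \mathbf{G}\otimes\mathbf{G} - \mathbf{G}\otimes\bar{\mathbf{G}} - \bar{\mathbf{G}}\otimes\mathbf{G} + \bar{\mathbf{G}}\otimes\bar{\mathbf{G}}.
\]
Next I would apply $\mathbb{E}[\cdot]$ to both sides. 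Applied entrywise, linearity of expectation gives
\[
\mathbb{E}\!\left[(\mathbf{G}-\bar{\mathbf{G}})\otimes(\mathbf{G}-\bar{\mathbf{G}})\right] = \mathbb{E}[\mathbf{G}\otimes\mathbf{G}] - \mathbb{E}[\mathbf{G}\otimes\bar{\mathbf{G}}] - \mathbb{E}[\bar{\mathbf{G}}\otimes\mathbf{G}] + \bar{\mathbf{G}}\otimes\bar{\mathbf{G}}.
\]
Because $\bar{\mathbf{G}}$ is deterministic, $\mathbf{G}\otimes\bar{\mathbf{G}}$ is a \emph{linear} function of $\mathbf{G}$: each of its entries is the product of a single entry of $\mathbf{G}$ with a single (deterministic) entry of $\bar{\mathbf{G}}$, so its expectation is obtained simply by replacing that entry of $\mathbf{G}$ with its mean; hence $\mathbb{E}[\mathbf{G}\otimes\bar{\mathbf{G}}] = \bar{\mathbf{G}}\otimes\bar{\mathbf{G}}$, and by the same reasoning $\mathbb{E}[\bar{\mathbf{G}}\otimes\mathbf{G}] = \bar{\mathbf{G}}\otimes\bar{\mathbf{G}}$. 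Substituting, the last display collapses to $\mathbb{E}[\mathbf{G}\otimes\mathbf{G}] - \bar{\mathbf{G}}\otimes\bar{\mathbf{G}}$, and rearranging,
\[
\mathbb{E}[\mathbf{G}\otimes\mathbf{G}] = \bar{\mathbf{G}}\otimes\bar{\mathbf{G}} + \mathbb{E}\!\left[(\mathbf{G}-\bar{\mathbf{G}})\otimes(\mathbf{G}-\bar{\mathbf{G}})\right],
\]
which is the claimed formula after substituting back $\mathbf{G} = \pmb{\mathscr{G}}^{(i)}_{j_{i}}$ and $\bar{\mathbf{G}} = \mathbb{E}[\pmb{\mathscr{G}}^{(i)}_{j_{i}}]$.

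There is no genuine difficulty here; the only points needing care are bookkeeping. One must check that the algebraic expansion and the interchange of $\mathbb{E}$ with $\otimes$ are legitimate — this holds because all the operations involved are entrywise polynomials and the entries of $\pmb{\mathscr{G}}^{(i)}_{j_{i}}$ are integrable (indeed Gaussian under the posterior \eqref{eq:ttpos}, and under the prior \eqref{eq:tt_prior}) — and one must use a fixed index-ordering convention for $\otimes$ throughout so that the three cross terms line up entrywise. With these points settled the identity is immediate, and it is exactly what makes $\mathbb{E}[\pmb{\mathscr{G}}^{(i)}_{j_{i}}\otimes\pmb{\mathscr{G}}^{(i)}_{j_{i}}]$ computable from the per-entry means and variances produced by the mean-field updates \eqref{core-variance}--\eqref{core-mean}.
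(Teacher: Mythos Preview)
Your proof is correct. It is the direct bilinearity argument: expand $(\mathbf{G}-\bar{\mathbf{G}})\otimes(\mathbf{G}-\bar{\mathbf{G}})$, take expectations, and use that the two cross terms collapse to $\bar{\mathbf{G}}\otimes\bar{\mathbf{G}}$ because $\bar{\mathbf{G}}$ is deterministic. This works for any random matrix with finite second moments and needs no assumption on the dependence structure among the entries.

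The paper takes a different, more concrete route. It invokes the mean-field factorization \eqref{eq:mean_field1} so that distinct entries of $\pmb{\mathscr{G}}^{(i)}_{j_i}$ are independent, writes the scalar identity
\[
\mathbb{E}\!\left[\pmb{\mathscr{G}}^{(i)}_{k,j_i,l}\,\pmb{\mathscr{G}}^{(i)}_{k',j_i,l'}\right]
= \mathbb{E}\!\left[\pmb{\mathscr{G}}^{(i)}_{k,j_i,l}\right]\mathbb{E}\!\left[\pmb{\mathscr{G}}^{(i)}_{k',j_i,l'}\right]
+ v_{\pmb{\mathscr{G}}^{(i)}_{k,j_i,l}}\,\delta(k-k')\,\delta(l-l'),
\]
and then reassembles these into the Kronecker form, identifying the second term as the block-sparse covariance $\pmb{\mathscr{V}}^{(i)}_{j_i}$ defined in \eqref{kroV}. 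Your argument is shorter and more general; the paper's argument, by going through the entrywise computation, additionally exhibits the explicit sparse structure of $\pmb{\mathscr{V}}^{(i)}_{j_i}$ (only the $(k,l)$-th position of the $(k,l)$-th block is nonzero), which is what actually gets used downstream in \eqref{eq:bd} and \eqref{sum}. So both are valid; yours proves the stated identity cleanly, while the paper's version simultaneously delivers the computational form of the covariance term.
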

\begin{proof}
From equation (\ref{eq:mean_field1}), we can see any two elements in $\pmb{\mathscr{G}}^{(i)}_{j_{i}}$ are independent,
\begin{equation*}
    \begin{split}
    &\mathbb{E}\left[\pmb{\mathscr{G}}^{(i)}_{k,j_{i},l} \pmb{\mathscr{G}}^{(i)}_{k^{\prime},j_{i},l^{\prime}}\right] =  \mathbb{E}\left[\pmb{\mathscr{G}}^{(i)}_{k,j_{i},l}\right] \mathbb{E}\left[\pmb{\mathscr{G}}^{(i)}_{k^{\prime},j_{i},l^{\prime}}\right] + v_{\pmb{\mathscr{G}}_{k,j_{i}, l }^{(i)}} \delta\left(k-k^{\prime}\right) \delta\left(l-l^{\prime}\right).
    \end{split}
\end{equation*}
where $\delta(x)=1$ if $x=0$ and zero otherwise. 
Let $p =r_{i-1}(k-1)+k^{\prime},\ q = r_{i}(l-1)+l^{\prime}$, then 
\begin{equation*}
    \mathbb{E}\left[\pmb{\mathscr{G}}^{(i)}_{j_{i}}\otimes \pmb{\mathscr{G}}^{(i)}_{j_{i}}\right]_{p,q} =\left(\mathbb{E}\left[\pmb{\mathscr{G}}^{(i)}_{j_{i}}\right]\otimes\mathbb{E}\left[\pmb{\mathscr{G}}^{(i)}_{j_{i}}\right]\right)_{p,q}+\pmb{\mathscr{V}}_{p,j_{i},q}^{(i)},
\end{equation*}
where 
\begin{equation}\label{kroV}
    \pmb{\mathscr{V}}_{j_d}^{(i)}:=\mathbb{E}\left[\left(\pmb{\mathscr{G}}_{j_i}^{(i)}-\mathbb{E}\left[\pmb{\mathscr{G}}_{j_i}^{(i)}\right]\right) \otimes\left(\pmb{\mathscr{G}}_{j_i}^{(i)}-\mathbb{E}\left[\pmb{\mathscr{G}}_{j_i}^{(i)}\right]\right)\right].
\end{equation}
 As a Kronecker-form covariance, $\pmb{\mathscr{V}}_{j_i}^{(i)} \in \mathbb{R}^{r_{i-1}^2 \times r_{i}^2}$ consists of block matrices with dimension $r_{i-1} \times r_{i}$ and the $(k, l)$-th block matrix contains only one nonzero element $v_{\pmb{\mathscr{G}}_{k,j_{i}, l }^{(i)}}$ at the $(k, l)$-th position. Thus, 
 
\begin{equation*}
    \mathbb{E}\left[\pmb{\mathscr{G}}^{(i)}_{j_{i}}\otimes \pmb{\mathscr{G}}^{(i)}_{j_{i}} \right]=\mathbb{E}\left[\pmb{\mathscr{G}}^{(i)}_{j_{i}}\right]\otimes\mathbb{E}\left[\pmb{\mathscr{G}}^{(i)}_{j_{i}}\right]+\pmb{\mathscr{V}}_{j_{i}}^{(i)}.
\end{equation*}

\end{proof}

\subsubsection{Posterior Distribution of Noise Precision}
Following equation (\ref{fac}), the update for the noise precision $\tau$ can be derived by rearranging the equation as follows,
\begin{equation}\label{eq:uptau}
    \log q^{(t)}_{\tau}(\tau) = \mathbb{E}_{q^{(t)}(\mathbf{\theta} \setminus \tau)}\left[\log \tilde{p}^{(t)}(\mathbf{\theta})\right] + \operatorname{const},
\end{equation}
from which, we can see the posterior of noise precision $\tau$ is also a Gamma distribution, 
\begin{equation}
q^{(t)}_{\tau}(\tau) = \operatorname{Gam}(\tau|\alpha^{(t)},\beta^{(t)}),
\end{equation}
and the posterior hyperparameters can be updated by
\begin{equation} \label{tau-alpha}
    \alpha^{(t)} =\alpha^{(t-1)}+ \frac{|\Omega_{t}|}{2},
\end{equation}
\begin{equation} \label{tau-beta}
    \begin{aligned} 
\beta^{(t)} =\beta^{(t-1)}+\frac{1}{2}\mathbb{E}\left[\sum_{\mathbf{j}\in\Omega_{t}}\left(x_{\mathbf{j}}-\prod_{d=1}^{D} \pmb{\mathscr{G}}^{(d)}_{j_{d}}\right)^{2}\right],
    \end{aligned}
\end{equation}
where
\begin{equation}\label{sum}
\begin{split}
    &\mathbb{E}\left[\sum_{\mathbf{j}\in\Omega_{t}}\left(x_{\mathbf{j}}-\prod_{d=1}^{D} \pmb{\mathscr{G}}^{(d)}_{j_{d}}\right)^{2}\right]= \pmb{\mathscr{X}}_{\Omega_{t}}^{T}\left(\pmb{\mathscr{X}}_{\Omega_{t}}-2\pmb{\mathscr{A}}_{\Omega_{t}} \right)+\sum_{\mathbf{j}\in\Omega_{t}} \prod_{d=1}^{D}\left(\mathbb{E}\left[\pmb{\mathscr{G}}^{(d)}_{j_{d}}\right]\otimes\mathbb{E}\left[\pmb{\mathscr{G}}^{(d)}_{j_{d}}\right]+\pmb{\mathscr{V}}_{j_{d}}^{(d)}\right).
\end{split}
\end{equation}
 Appendix \ref{appendix3} provides more details about the update formula. From \eqref{tau-alpha}, we can see that the shape parameter updates with the number of the newly receiving data batch $B_{t}$. The summation term in \eqref{tau-beta} controls the noise precision $\tau$ through the rate parameter $\hat{\beta}$. It is seen in \eqref{tau-beta} that when the model dose not fit the streaming data well, there is an increase in $\hat{\beta}$ and thus a decrease in the noise variance since $\mathbb{E}[\tau]=\hat{\alpha}/\hat{\beta}$. Computing (\ref{sum}) is challenging, so we present a proposition as follows.
\begin{proposition}\label{pro1}
Given a set of independent random matrices $\{\pmb{\mathscr{G}}_{j_{d}}^{(d)}\}_{d=1}^{D}$, we assume that $\forall d\in \{1,\ldots,D\}, \forall k\in\{1\,\ldots,r_{d-1}\}, \forall l\in\{1\,\ldots,r_{d}\}$, the vectors $\{\pmb{\mathscr{G}}_{k,:,l}^{(d)}\}$ are independent, then
\begin{equation*}
\mathbb{E}\left[\left(\prod_{d=1}^{D} \pmb{\mathscr{G}}^{(d)}_{j_{d}}\right)^{2}\right] =\prod_{d=1}^{D}\left(\mathbb{E}\left[\pmb{\mathscr{G}}^{(d)}_{j_{d}}\right]\otimes\mathbb{E}\left[\pmb{\mathscr{G}}^{(d)}_{j_{d}}\right]+\pmb{\mathscr{V}}_{j_{d}}^{(d)}\right).
 \end{equation*}
 \end{proposition}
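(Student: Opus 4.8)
The plan is to reduce the claim to a scalar identity by expanding the square $\left(\prod_{d=1}^{D}\pmb{\mathscr{G}}^{(d)}_{j_{d}}\right)^{2}$ in terms of Kronecker products. Since $\prod_{d=1}^{D}\pmb{\mathscr{G}}^{(d)}_{j_{d}}$ is a scalar (the boundary condition $r_0=r_D=1$ makes the multilinear product a $1\times1$ matrix), its square equals $\left(\prod_{d=1}^{D}\pmb{\mathscr{G}}^{(d)}_{j_{d}}\right)\otimes\left(\prod_{d=1}^{D}\pmb{\mathscr{G}}^{(d)}_{j_{d}}\right)$. First I would invoke the mixed-product property of the Kronecker product, $(AB)\otimes(CD)=(A\otimes C)(B\otimes D)$, applied repeatedly across the chain $d=1,\ldots,D$, to obtain
\begin{equation*}
\left(\prod_{d=1}^{D}\pmb{\mathscr{G}}^{(d)}_{j_{d}}\right)\otimes\left(\prod_{d=1}^{D}\pmb{\mathscr{G}}^{(d)}_{j_{d}}\right)=\prod_{d=1}^{D}\left(\pmb{\mathscr{G}}^{(d)}_{j_{d}}\otimes\pmb{\mathscr{G}}^{(d)}_{j_{d}}\right),
\end{equation*}
where the product on the right is the ordinary matrix product of the $r_{d-1}^2\times r_d^2$ factors. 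This is a deterministic identity, true before taking expectations.

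Next I would take expectations of both sides. The key structural fact is the independence hypothesis: the matrices $\{\pmb{\mathscr{G}}^{(d)}_{j_{d}}\}_{d=1}^{D}$ are independent across $d$, hence so are the ``squared'' blocks $\pmb{\mathscr{G}}^{(d)}_{j_{d}}\otimes\pmb{\mathscr{G}}^{(d)}_{j_{d}}$. Therefore the expectation of the product factorizes:
\begin{equation*}
\mathbb{E}\left[\prod_{d=1}^{D}\left(\pmb{\mathscr{G}}^{(d)}_{j_{d}}\otimes\pmb{\mathscr{G}}^{(d)}_{j_{d}}\right)\right]=\prod_{d=1}^{D}\mathbb{E}\left[\pmb{\mathscr{G}}^{(d)}_{j_{d}}\otimes\pmb{\mathscr{G}}^{(d)}_{j_{d}}\right].
\end{equation*}
Here one must be slightly careful that ``expectation of a product of independent random matrices equals the product of expectations'' holds entrywise, which is immediate since each entry of the matrix product is a sum of products of entries drawn from distinct, independent factors. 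Finally, I would substitute Proposition~\ref{pro2}, which gives $\mathbb{E}\left[\pmb{\mathscr{G}}^{(d)}_{j_{d}}\otimes\pmb{\mathscr{G}}^{(d)}_{j_{d}}\right]=\mathbb{E}\left[\pmb{\mathscr{G}}^{(d)}_{j_{d}}\right]\otimes\mathbb{E}\left[\pmb{\mathscr{G}}^{(d)}_{j_{d}}\right]+\pmb{\mathscr{V}}_{j_{d}}^{(d)}$ for each $d$, yielding exactly the stated formula.

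The main obstacle is not any single step but getting the bookkeeping right: one must verify that the mixed-product property applies in the correct order along the chain (the left factors of successive cores multiply among themselves, the right factors among themselves), and that the within-core correlations — captured by $\pmb{\mathscr{V}}_{j_{d}}^{(d)}$ via the fiber-independence assumption in the hypothesis — are handled by Proposition~\ref{pro2} rather than being assumed away. In particular the hypothesis only asserts independence of the fibers $\pmb{\mathscr{G}}^{(d)}_{k,:,l}$, so $\pmb{\mathscr{V}}_{j_{d}}^{(d)}$ is genuinely block-diagonal-like (one nonzero entry per block) as established in the proof of Proposition~\ref{pro2}, and this is what makes the second moment of a single core tractable. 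Everything else is routine linear algebra over the Kronecker product.
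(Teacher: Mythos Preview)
Your proposal is correct and follows essentially the same route as the paper's own proof: rewrite the scalar square as a Kronecker self-product, apply the mixed-product property to split it into $\prod_{d=1}^{D}\left(\pmb{\mathscr{G}}^{(d)}_{j_{d}}\otimes\pmb{\mathscr{G}}^{(d)}_{j_{d}}\right)$, use independence across $d$ to factor the expectation, and then invoke Proposition~\ref{pro2} for each factor. Your additional remarks about the entrywise justification for factoring the expectation and the role of the fiber-independence hypothesis are helpful elaborations but do not depart from the paper's argument.
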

 \begin{proof}
 For a scalar $\prod_{d=1}^{D} \pmb{\mathscr{G}}^{(d)}_{j_{d}}$, using the mix product property of Kronecker product and conclusion of proposition \ref{pro2},  we have 
 \begin{equation*}
 \begin{aligned}
      \mathbb{E}\left[\left(\prod_{d=1}^{D} \pmb{\mathscr{G}}^{(d)}_{j_{d}}\right)^{2}\right] &= \mathbb{E}\left[\left(\prod_{d=1}^{D} \pmb{\mathscr{G}}^{(d)}_{j_{d}}\right)\otimes \left(\prod_{d=1}^{D} \pmb{\mathscr{G}}^{(d)}_{j_{d}}\right)\right]\\
      &=\prod_{d=1}^{D}\mathbb{E}\left[\left(\pmb{\mathscr{G}}^{(d)}_{j_{d}} \otimes \pmb{\mathscr{G}}^{(d)}_{j_{d}}\right)\right]\\
    &=\prod_{d=1}^{D}\left(\mathbb{E}\left[\pmb{\mathscr{G}}^{(d)}_{j_{d}}\right]\otimes\mathbb{E}\left[\pmb{\mathscr{G}}^{(d)}_{j_{d}}\right]+\pmb{\mathscr{V}}_{j_{d}}^{(d)}\right),
 \end{aligned}
 \end{equation*}
where $\pmb{\mathscr{V}}_{j_d}^{(d)}$ is defined in (\ref{kroV}). 
\end{proof}

\subsubsection{Algorithm}
Since the ELBO in \eqref{eq:elbo} is only guaranteed to converge to a local maximum\cite{blei2017variational}, a good initialization is important. The noise precision $\tau$ in \eqref{eq:tau} is initialized as  $q^{(0)}_{\tau}(\tau) = \operatorname{Gam}(\tau|\alpha_{0},\beta_{0})$ where $\alpha_{0}=\beta_{0}=10^{-3}$, and then $\mathbb{E}_{q^{(0)}_{\tau}}[\tau] = 1$. For TT-cores $\{\pmb{\mathscr{G}}^{(d)}\}_{d=1}^{D}$ in \eqref{eq:tt_prior}, each fiber $\pmb{\mathscr{G}}^{(d)}_{k,:,l}$ is initialized as, 
\begin{equation}\label{ini_G}
    q^{(0)}_{\pmb{\mathscr{G}}^{(d)}_{k,:,l}}(\pmb{\mathscr{G}}^{(d)}_{k,:,l}) = \mathcal{N}(\pmb{\mathscr{G}}^{(d)}_{k,:,l}|\mathbf{m}_{\pmb{\mathscr{G}}^{(d)}_{k,:,l}},\mathbf{I}),
\end{equation}
where each element of $\mathbf{m}_{\pmb{\mathscr{G}}^{(d)}_{k,:,l}}$ is generated from a standard uniform distribution $U(0,1)$. Then  $\mathbb{E}_{q^{(0)}_{\pmb{\mathscr{G}}^{(d)}}}(\pmb{\mathscr{G}}^{(d)}) = \mathbf{m}_{\pmb{\mathscr{G}}^{(d)}}$, where $\mathbf{m}_{\pmb{\mathscr{G}}^{(d)}}$ is a tensor with the same size as $\pmb{\mathscr{G}}^{(d)}$ and collects the fibers $\pmb{\mathscr{G}}^{(d)}_{k,:,l}$, for $k=1,\ldots,r_{d}$, $l = 1,\ldots,r_{d-1}$. 

Our SPTT algorithm is summarized in Algorithm~\ref{alg:sptt}, where we stop the algorithm and output the updated mean and variance of each TT-cores if the final data batch $B_{t_{\max}}$ is used.
\begin{algorithm}
	\caption{Streaming probabilistic tensor train decomposition (SPTT)}
       \label{alg:sptt}
	\begin{algorithmic}[1]
		\Require Data streams $\{B_{1},\ldots,B_{t_{\max}}\}$.
  \State Initialize $q^{(0)}_{\tau}(\tau)$ using \eqref{eq:tau}, $\alpha_{0}=\beta_{0}=10^{-3}$, then $\mathbb{E}_{q^{(0)}_{\tau}}[\tau] = 1$. 
\State Initialize $q^{(0)}_{\pmb{\mathscr{G}}^{(d)}}(\pmb{\mathscr{G}}^{(d)})$ using \eqref{ini_G}, for $d=1,\ldots,D$.
\State Set $t=1$ and $\epsilon_{\mathscr{G}} = 1$.
\While{$t<t_{\max}$}
\State Load the data batch $B_{t}$.
\While{Not converge}
\For{$d = 1$ to $D$}
\State{Calculate $\mathbb{E}\left[e^{(<d)}\right]$,$\mathbb{E}\left[b^{(<d)}\right]$, $\mathbb{E}\left[e^{(>d)}\right]$, $\mathbb{E}\left[b^{(>d)}\right]$ by (\ref{eq:td}) and (\ref{eq:bd}).}
\State {
Update $q^{(t)}_{\pmb{\mathscr{G}}^{(d)}}$ associated with $B_{t}$ via (\ref{core-variance}) and (\ref{core-mean}).}
\State{Update $q^{(t)}_{\tau}$ via (\ref{tau-alpha}) and (\ref{tau-beta}).}
\EndFor
\EndWhile
\State Let $t = t+1$.
\EndWhile
\State Let $q^{*}_{\pmb{\mathscr{G}}^{(d)}}=q^{(t_{\max})}_{\pmb{\mathscr{G}}^{(d)}},\  d=1,\ldots D$.
\Ensure The updated posterior of TT-cores $q^{*}_{\pmb{\mathscr{G}}^{(d)}},\  d=1,\ldots, D$.
	\end{algorithmic}
\end{algorithm}

The complexity of the proposed algorithm arises from updating the posterior of $\{\pmb{\mathscr{G}}^{(d)}\}_{d=1}^{D}$ and $\tau$. For simplicity, we assume that all TT-ranks are initially set as $L$, the length in each order is $N$, and the batch size is $S$. In each streaming batch, updating of each TT core $\pmb{\mathscr{G}}^{(d)}$ involves $\mathbf{O}(DL^{2})$ operations to get $e^{(<d)}$ and $e^{(>d)}$, $\mathbf{O}(DL^{4})$ operations to obtain $b^{(<d)}$ and $b^{(>d)}$, $\mathbf{O}(SDL^{2})$ operations to compute the mean and $\mathbf{O}(SDL^{4})$ operations to calculate the variance of a TT core in (\ref{core-variance}) and (\ref{core-mean}) respectively. Furthermore, computing each $\tau$ requires $\mathbf{O}(SD(L^4 + L^2 ))$ operations. Thus, the overall time complexity is $\mathbf{O}(SDL^4)$. Thus, the overall time complexity is $\mathbf{O}(SDL^4)$. The space complexity is $\mathbf{O}(NDL^{2})$, which is required to store the posterior mean and variance of $\{\pmb{\mathscr{G}}^{(d)}\}_{d=1}^{D}$.   
\section{Experiments}
\label{sec:experiments}
Based on both high-order synthetic and real-world applications, our SPTT are compared with related decomposition methods, including Bayesian streaming tensor decomposition methods: POST~\cite{du2018probabilistic},  BASS-Tucker~\cite{fang2021bayesian}, and static tensor decomposition methods: CP-ALS~\cite{tensortoolbox}, CP-WOPT~\cite{acar2011scalable}, Tucker-ALS~\cite{de2000best}. All results of this paper are obtained in MATLAB (2022a) on a computer with an Intel(R) Core(TM) i7-8650U CPU @ 1.90GHz 2.11 GHz. 

\subsection{Synthetic Data}
The synthetic tensor data is generated by the following procedure. Through TT decomposition in \eqref{TTdecom}, we create a true tensor $\pmb{\mathscr{A}}\in \mathbb{R}^{20\times 20\times 20\times 20}$ in TT-format with the TT-ranks $(1,3,3,3,1)$ by sampling the elements of TT-cores $\{\pmb{\mathscr{G}}^{(d)}\}_{d=1}^{4}$ from standard uniform distribution $U(0,1)$, and $\pmb{\mathscr{A}} = \langle\langle \pmb{\mathscr{G}}^{(1)},\pmb{\mathscr{G}}^{(2)},\pmb{\mathscr{G}}^{(3)},\pmb{\mathscr{G}}^{(4)}\rangle\rangle$. The synthetic observed tensor $\pmb{\mathscr{U}}$ is generated as follows,
    \begin{equation}
        \pmb{\mathscr{U}} = \pmb{\mathscr{A}}+\pmb{\mathscr{W}},
    \end{equation}
    where $\pmb{\mathscr{W}}$ is a Gaussian noise tensor with its element $w_{\mathbf{j}}\sim \mathcal{N}(0,\sigma^{2})$ and $\sigma$ is noise level, controlled by signal-to-noise-ratio (SNR),
    \begin{equation}
        \operatorname{SNR} = 10 \log\left(\frac{\operatorname{var}(\pmb{\mathscr{A}})}{\sigma^{2}}\right).
    \end{equation}
In order to investigate the ability of our streaming decomposition algorithm, the ratio of the number of observed tensor elements to the total tensor elements is set to $15\%$, and the indexes of observed elements are chosen randomly. We divide $10\%$ observed elements into the test set $T$, remaining $90\%$ of observed elements into the training set $B$. Then the training set $B$ is randomly partitioned into a stream of small data batches $\{B_{1},B_{2},\ldots, B_{t_{\max}}\}$. The batch size is denoted by $S$, then $t_{\max} = |B|/S$ in Algorithm \ref{alg:sptt}, where $|B|$ is the total number of the training set. Moreover, the entire observed data is input to the static decomposition algorithms (CP-ALS, CP-WOPT, and Tuckers-ALS). The maximum iteration number is set to $M = 100$ in Algorithm \ref{alg:sptt}, and TT-ranks of initial TT-cores are denoted by a vector $(1, R, R, R,1)$. In related tensor decomposition methods experiments, default settings are used for maximum iteration number and initialization.

    Denote $\hat{\pmb{\mathscr{A}}}$ as the TT-format tensor formulated by the updated posterior mean of TT-cores. To evaluate the performance of the above tensor decomposition approaches, the following relative error $\epsilon$ is used as reconstruction criterion,
    \begin{equation}\label{rerror}
        \epsilon= \frac{\Vert \hat{\pmb{\mathscr{A}}}_{T} - \pmb{\mathscr{A}}_{T} \Vert_{F}}{\|\pmb{\mathscr{A}}_{T}\|_{F}}.
    \end{equation}
    where $\hat{\pmb{\mathscr{A}}}_{T}$ denote a vector contains elements of tensor $\hat{\pmb{\mathscr{A}}}$ on test set $T$. 

 Next, our SPTT is compared with related tensor decomposition methods (POST, BASS-Tucker, CP-ALS, CP-WOPT and Tucker-ALS) regarding the relative reconstruction error under different batch sizes $S$, different initial rank $R$, and different SNR.
    To be noticed, the rank $R$ in CP, Tucker and TT decomposition denotes the CP-rank $R$, Tucker-ranks $(R,R,R,R)$ and TT-ranks $(1,R,R,R,1)$ respectively. All tensor decomposition algorithms are repeated $5$ times, and the calculated average relative error and standard deviation are shown in Fig. \ref{fig:syn1}. In Fig. \ref{fig:syn1} (a), the settings are the fixed rank $R = 3$, $\operatorname{SNR} = 20$, and different choices of batch sizes $S \in \{ 2^8, 2^9 , 2^{10}, 2^{11}\}$. In Fig. \ref{fig:syn1} (b), the settings are the fixed streaming batch size $S =512$, $\operatorname{SNR} = 20$, and different rank $R \in \{3,4,5,6\}$. In Fig. \ref{fig:syn1} (c), the settings are $S = 512$, $R = 3$, and different noise level $\operatorname{SNR} \in \{15,20,25,30\}$. As we can see, SPTT outperforms BASS-Tucker, POST and static decomposition methods in all the cases. 

\begin{figure}[ht]
\vskip 0.2in
\begin{center}
\centerline{\includegraphics[width=\columnwidth]{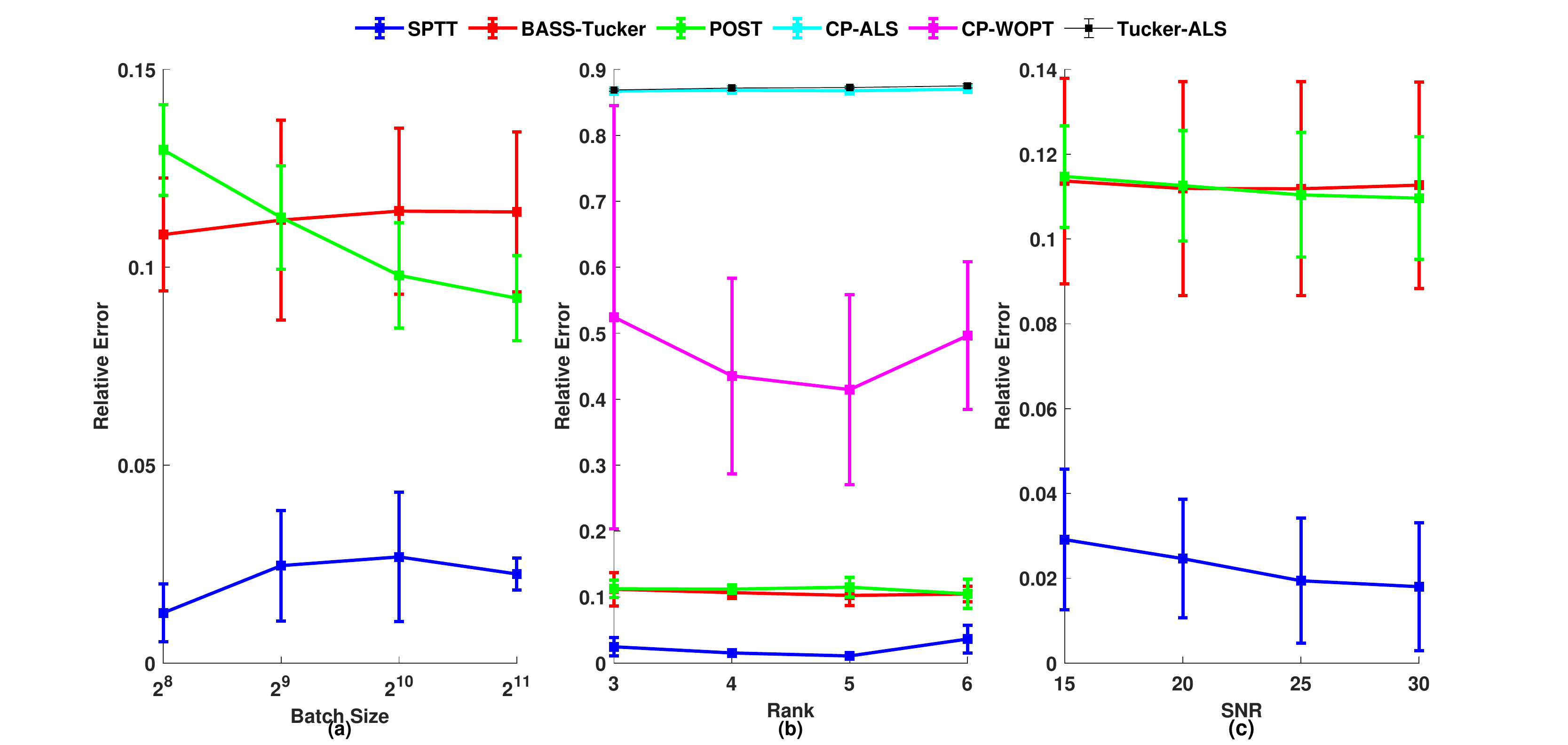}}
\caption{Predictive performance of synthetic data under different conditions.}
\label{fig:syn1}
\end{center}
\vskip -0.2in
\end{figure}

In addition, the running predictive performance of SPTT is evaluated. We fix the batch size $S = 512$, $\operatorname{SNR}=20$, and gradually feed the training set $\{B_{1},B_{2},\ldots, B_{t_{\max}}\}$ to SPTT, BASS-Tucker and POST. Two cases ($R = 3,\,5$) are considered to test the prediction accuracy after each data batch is used. The corresponding running relative error is shown in Fig. \ref{fig:syn2}. As we can see, the relative error of our SPTT is clearly smaller than that of  POST and BASS-Tucker, which demonstrates the accuracy of our SPTT.

\begin{figure}[ht]
\vskip 0.2in
\begin{center}
\centerline{\includegraphics[width=\columnwidth]{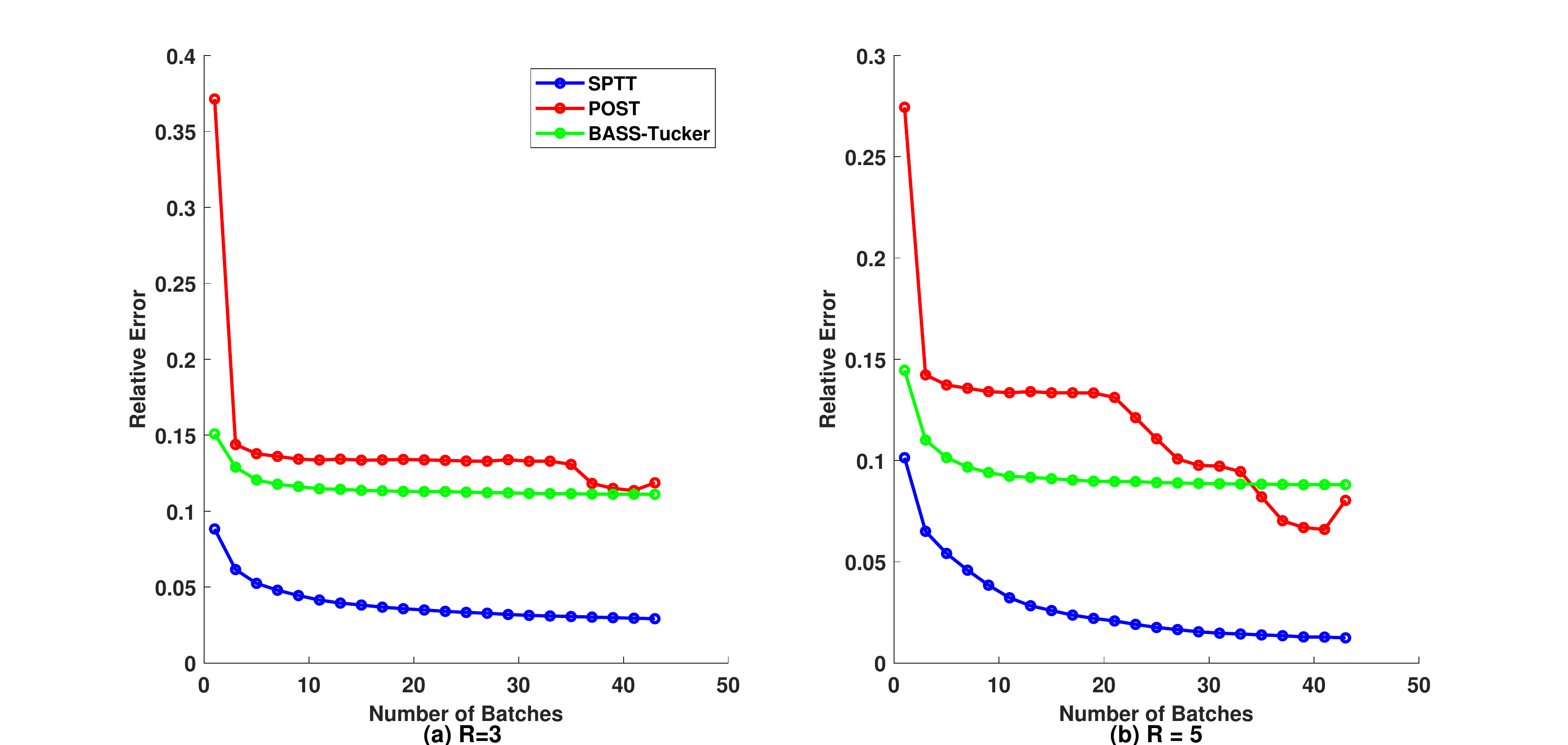}}
\caption{The running predictive performance of synthetic data.}
\label{fig:syn2}
\end{center}
\vskip -0.2in
\end{figure}

\subsection{Real-world Applications}
To evaluate our SPTT, three cases of datasets (ALOG~\cite{zhe2016distributed}, Carphone~\cite{song2020robust}, Flow injection\footnote{Retrieved from \href{www.models.kvl.dk}{www.models.kvl.dk}}) are considered. ALOG data come from a three-mode (user, action, resource) tensor of size $200\times 100\times 200$ including $0.66\%$ observed elements, which represents three-way management operations; Carphone data are extracted from a three-mode (length, width, frames) of video data of size $144\times 176 \times 180$, which contains $0.5\%$ nonzeros elements; Flow injection data,  as a three-mode (samples, wavelengths, times) tensor of size $12\times 100\times 89$, are extracted from a flow injection analysis system and include $21360$ nonzero elements. 

Here our SPTT is compared with the related methods (POST, BASS-Tucker, CP-ALS, CP-WOPT, and Tucker-ALS) in terms of the relative reconstruction error $\epsilon$ defined in \eqref{rerror} under different batch size $S$ and different initial rank $R$. To be noticed, the rank $R$ in CP, Tucker and TT decomposition denotes the CP-rank $R$, Tucker-ranks $(R,R,R)$ and TT-ranks $(1,R,R,1)$ respectively.
The maximum iteration number is set to
$M=100$ in Algorithm \ref{alg:sptt} and TT-ranks of initial TT-cores are set to $(1,R,R,1)$. For maximum iteration number and initialization of all the related tensor decomposition methods, we use their corresponding default settings.   The observed elements of Alog are randomly divided into $75\%$ for training and $25\%$ for testing, and for the other two datasets, their $10\%$ observed elements are prepared for testing. For POST, SPTT, and BASS-Tucker, the training elements are  randomly partitioned into a stream of small batches according to batch size $S$. Moreover, we input the whole observed data for static decomposition algorithms (CP-ALS, CP-WOPT, and Tucker-ALS). Fig. \ref{fig:real1} reports the average relative error and standard deviation. In Fig. \ref{fig:real1} (a)-(c), we fix the batch size $S =512$ and show how the predictive performance of each method varies with different rank $R \in \{3,4,5,6\}$. The bigger the rank, the more expensive for SPTT to factorize each batch. It can be seen from Fig. \ref{fig:real1} (c), as the initial rank $R$ increases from $3$ to $5$, the relative error of SPTT gradually increases and finally gets close to the relative error of POST. 
To test the capacity of decomposition algorithms on different batch sizes,  we fix the rank $R = 3$, and test the performance with different choices of batch sizes $S \in \{ 2^8, 2^9, 2^{10}, 2^{11}\}$ in Fig. \ref{fig:real1} (d)-(f). As we can see, SPTT outperforms BASS-Tucker, POST, and static decomposition methods for three datasets, especially for Carphone data.

\begin{figure}[ht]
\vskip 0.2in
\begin{center}
\centerline{\includegraphics[width=\columnwidth]{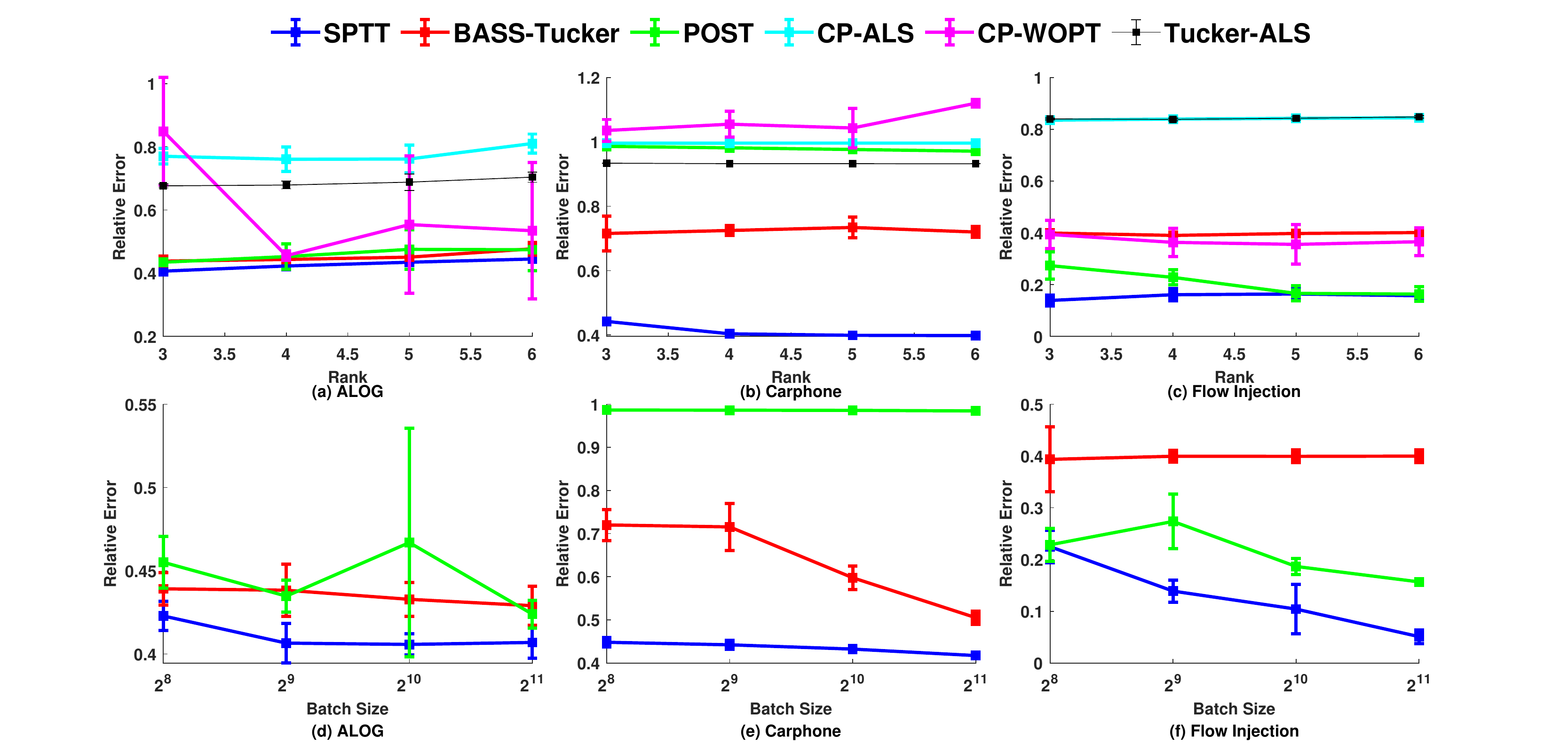}}
\caption{Predictive performance with different rank(top row) and streaming batch size (bottom row).}
\label{fig:real1}
\end{center}
\vskip -0.2in
\end{figure}

The running predictive performance of SPTT is evaluated. We fix the batch size $S = 512$, and continuously feed the training set to SPTT, BASS-Tucker and POST. For the three datasets, two cases are considered, i.e. $R = 3$ and $R = 5$ to test the prediction accuracy after each data batch is used. The running relative error is shown in Fig. \ref{fig:syn2}. In all the cases, when the number of batches is small, POST or BASS-Tucker may performs better than SPTT. However, as the data streams, SPTT beats POST and BASS-Tucker gradually. 

\begin{figure}[ht]
\vskip 0.2in
\begin{center}
\centerline{\includegraphics[width=\columnwidth]{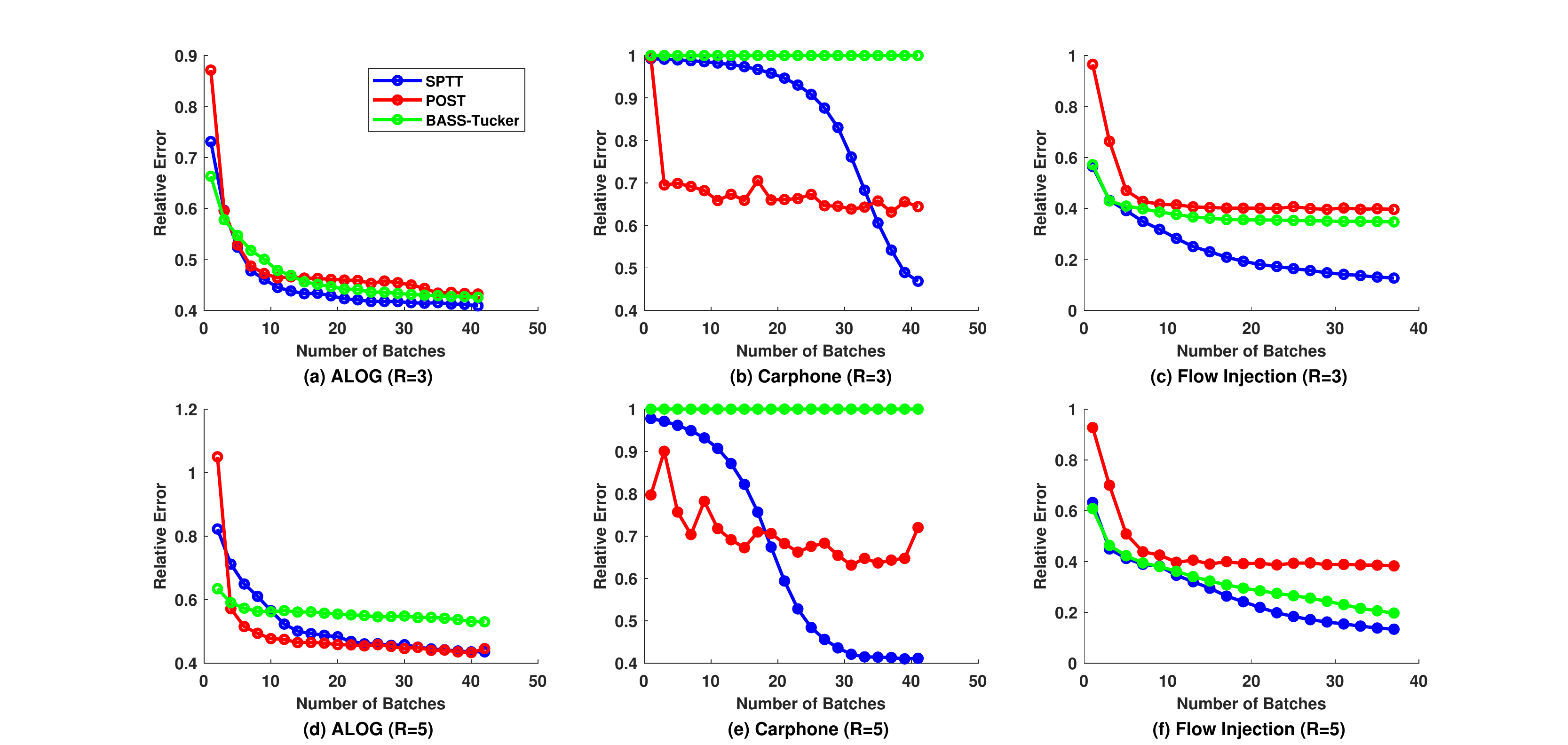}}
\caption{The running predictive performance in real-world applications.}
\label{fig:real2}
\end{center}
\vskip -0.2in
\end{figure}

\section{CONCLUSIONS}
\label{sec:conclusions}
In this paper, we propose a novel Bayesian streaming tensor decomposition algorithm, namely SPTT, to find the latent structure approximation from high-order incomplete and noisy streaming data and predict missing values. Leveraging TT decomposition and a customized Gaussian prior, SPTT addresses key challenges in high-order streaming data recovery. Numerical experiments show that our SPTT algorithm can produce accurate predictions on synthetic and real-world streaming data. For future work, we are interested in develop a sparse strategy based on SPTT to further prevent overfitting and enhance scalability.


\bibliographystyle{unsrt}  
\bibliography{references}  

\begin{thebibliography}{10}

\bibitem{huang2015tencentrec}
Yanxiang Huang, Bin Cui, Wenyu Zhang, Jie Jiang, and Ying Xu.
\newblock Tencentrec: Real-time stream recommendation in practice.
\newblock In {\em Proceedings of the 2015 ACM SIGMOD international conference on management of data}, pages 227--238, 2015.

\bibitem{hu2022streaming}
Yue Hu, Ao~Qu, Yanbing Wang, and Daniel~B Work.
\newblock Streaming data preprocessing via online tensor recovery for large environmental sensor networks.
\newblock {\em ACM Transactions on Knowledge Discovery from Data (TKDD)}, 16(6):1--24, 2022.

\bibitem{zipkin2016point}
Joseph~R Zipkin, Frederic~P Schoenberg, Kathryn Coronges, and Andrea~L Bertozzi.
\newblock Point-process models of social network interactions: Parameter estimation and missing data recovery.
\newblock {\em European journal of applied mathematics}, 27(3):502--529, 2016.

\bibitem{hars1}
Richard~A Harshman et~al.
\newblock Foundations of the parafac procedure: Models and conditions for an" explanatory" multimodal factor analysis.
\newblock 1970.

\bibitem{bro1997parafac}
Rasmus Bro.
\newblock Parafac. tutorial and applications.
\newblock {\em Chemometrics and intelligent laboratory systems}, 38(2):149--171, 1997.

\bibitem{TANG2020109326}
Kejun Tang and Qifeng Liao.
\newblock Rank adaptive tensor recovery based model reduction for partial differential equations with high-dimensional random inputs.
\newblock {\em Journal of Computational Physics}, 409:109326, 2020.

\bibitem{tucker1966some}
Ledyard~R Tucker.
\newblock Some mathematical notes on three-mode factor analysis.
\newblock {\em Psychometrika}, 31(3):279--311, 1966.

\bibitem{xiao2021efficient}
Chuanfu Xiao, Chao Yang, and Min Li.
\newblock Efficient alternating least squares algorithms for low multilinear rank approximation of tensors.
\newblock {\em Journal of Scientific Computing}, 87:1--25, 2021.

\bibitem{oseledets2011tensor}
Ivan~V Oseledets.
\newblock Tensor-train decomposition.
\newblock {\em SIAM Journal on Scientific Computing}, 33(5):2295--2317, 2011.

\bibitem{feng2020tensor}
Yani Feng, Kejun Tang, Lianxing He, Pingqiang Zhou, and Qifeng Liao.
\newblock Tensor train random projection.
\newblock {\em arXiv preprint arXiv:2010.10797}, 2020.

\bibitem{roy2020survey}
Rita Roy and G~Appa Rao.
\newblock Survey on pre-processing web log files in web usage mining.
\newblock {\em International Journal of Advanced Science and Technology}, 29(3 Special Issue):682--691, 2020.

\bibitem{koutra2012tensorsplat}
Danai Koutra, Evangelos~E Papalexakis, and Christos Faloutsos.
\newblock Tensorsplat: Spotting latent anomalies in time.
\newblock In {\em 2012 16th Panhellenic Conference on Informatics}, pages 144--149. IEEE, 2012.

\bibitem{nion2009adaptive}
Dimitri Nion and Nicholas~D Sidiropoulos.
\newblock Adaptive algorithms to track the parafac decomposition of a third-order tensor.
\newblock {\em IEEE Transactions on Signal Processing}, 57(6):2299--2310, 2009.

\bibitem{minh2016adaptive}
Truong Minh-Chinh, Viet-Dung Nguyen, Nguyen Linh-Trung, and Karim Abed-Meraim.
\newblock Adaptive parafac decomposition for third-order tensor completion.
\newblock In {\em 2016 IEEE Sixth International Conference on Communications and Electronics (ICCE)}, pages 297--301. IEEE, 2016.

\bibitem{nguyen2016fast}
Viet-Dung Nguyen, Karim Abed-Meraim, and Nguyen Linh-Trung.
\newblock Fast adaptive parafac decomposition algorithm with linear complexity.
\newblock In {\em 2016 IEEE International Conference on Acoustics, Speech and Signal Processing (ICASSP)}, pages 6235--6239. IEEE, 2016.

\bibitem{nguyen2017second}
Viet-Dung Nguyen, Karim Abed-Meraim, and Nguyen Linh-Trung.
\newblock Second-order optimization based adaptive parafac decomposition of three-way tensors.
\newblock {\em Digital Signal Processing}, 63:100--111, 2017.

\bibitem{song2017multi}
Qingquan Song, Xiao Huang, Hancheng Ge, James Caverlee, and Xia Hu.
\newblock Multi-aspect streaming tensor completion.
\newblock In {\em Proceedings of the 23rd ACM SIGKDD international conference on knowledge discovery and data mining}, pages 435--443, 2017.

\bibitem{najafi2019outlier}
Mehrnaz Najafi, Lifang He, and S~Yu Philip.
\newblock Outlier-robust multi-aspect streaming tensor completion and factorization.
\newblock In {\em IJCAI}, pages 3187--3194, 2019.

\bibitem{yang2020multi}
Hye-Kyung Yang and Hwan-Seung Yong.
\newblock Multi-aspect incremental tensor decomposition based on distributed in-memory big data systems.
\newblock {\em Journal of Data and Information Science}, 5(2):13--32, 2020.

\bibitem{yang2023dismastd}
Keyu Yang, Yunjun Gao, Yifeng Shen, Baihua Zheng, and Lu~Chen.
\newblock Dismastd: An efficient distributed multi-aspect streaming tensor decomposition.
\newblock In {\em Proceedings of the ACM Turing Award Celebration Conference-China 2023}, pages 127--128, 2023.

\bibitem{sun2006beyond}
Jimeng Sun, Dacheng Tao, and Christos Faloutsos.
\newblock Beyond streams and graphs: dynamic tensor analysis.
\newblock In {\em Proceedings of the 12th ACM SIGKDD international conference on Knowledge discovery and data mining}, pages 374--383, 2006.

\bibitem{sun2008incremental}
Jimeng Sun, Dacheng Tao, Spiros Papadimitriou, Philip~S Yu, and Christos Faloutsos.
\newblock Incremental tensor analysis: Theory and applications.
\newblock {\em ACM Transactions on Knowledge Discovery from Data (TKDD)}, 2(3):1--37, 2008.

\bibitem{li2007robust}
Xi~Li, Weiming Hu, Zhongfei Zhang, Xiaoqin Zhang, and Guan Luo.
\newblock Robust visual tracking based on incremental tensor subspace learning.
\newblock In {\em 2007 IEEE 11th international conference on computer vision}, pages 1--8. IEEE, 2007.

\bibitem{hu2011incremental}
Weiming Hu, Xi~Li, Xiaoqin Zhang, Xinchu Shi, Stephen Maybank, and Zhongfei Zhang.
\newblock Incremental tensor subspace learning and its applications to foreground segmentation and tracking.
\newblock {\em International Journal of Computer Vision}, 91:303--327, 2011.

\bibitem{du2018probabilistic}
Yishuai Du, Yimin Zheng, Kuang-chih Lee, and Shandian Zhe.
\newblock Probabilistic streaming tensor decomposition.
\newblock In {\em 2018 IEEE International Conference on Data Mining (ICDM)}, pages 99--108. IEEE, 2018.

\bibitem{fang2021bayesian}
Shikai Fang, Robert~M. Kirby, and Shandian Zhe.
\newblock Bayesian streaming sparse tucker decomposition.
\newblock In Cassio de~Campos and Marloes~H. Maathuis, editors, {\em Proceedings of the Thirty-Seventh Conference on Uncertainty in Artificial Intelligence}, volume 161 of {\em Proceedings of Machine Learning Research}, pages 558--567. PMLR, 27--30 Jul 2021.

\bibitem{9513808}
Le~Xu, Lei Cheng, Ngai Wong, and Yik-Chung Wu.
\newblock Probabilistic tensor train decomposition with automatic rank determination from noisy data.
\newblock In {\em 2021 IEEE Statistical Signal Processing Workshop (SSP)}, pages 461--465, 2021.

\bibitem{wang2016tensor}
Wenqi Wang, Vaneet Aggarwal, and Shuchin Aeron.
\newblock Tensor completion by alternating minimization under the tensor train (tt) model.
\newblock {\em arXiv preprint arXiv:1609.05587}, 2016.

\bibitem{YUAN201953}
Longhao Yuan, Qibin Zhao, Lihua Gui, and Jianting Cao.
\newblock High-order tensor completion via gradient-based optimization under tensor train format.
\newblock {\em Signal Processing: Image Communication}, 73:53--61, 2019.
\newblock Tensor Image Processing.

\bibitem{broderick2013streaming}
Tamara Broderick, Nicholas Boyd, Andre Wibisono, Ashia~C Wilson, and Michael~I Jordan.
\newblock Streaming variational bayes.
\newblock {\em Advances in neural information processing systems}, 26, 2013.

\bibitem{wainwright2008graphical}
Martin~J Wainwright, Michael~I Jordan, et~al.
\newblock Graphical models, exponential families, and variational inference.
\newblock {\em Foundations and Trends{\textregistered} in Machine Learning}, 1(1--2):1--305, 2008.

\bibitem{blei2017variational}
David~M Blei, Alp Kucukelbir, and Jon~D McAuliffe.
\newblock Variational inference: A review for statisticians.
\newblock {\em Journal of the American statistical Association}, 112(518):859--877, 2017.

\bibitem{tensortoolbox}
Tamara G.~Kolda Brett W.~Bader et~al.
\newblock Tensor toolbox for matlab, version 3.4.
\newblock \url{www.tensortoolbox.org}, September 2022.

\bibitem{acar2011scalable}
Evrim Acar, Daniel~M Dunlavy, Tamara~G Kolda, and Morten M{\o}rup.
\newblock Scalable tensor factorizations for incomplete data.
\newblock {\em Chemometrics and Intelligent Laboratory Systems}, 106(1):41--56, 2011.

\bibitem{de2000best}
Lieven De~Lathauwer, Bart De~Moor, and Joos Vandewalle.
\newblock On the best rank-1 and rank-(r 1, r 2,..., rn) approximation of higher-order tensors.
\newblock {\em SIAM journal on Matrix Analysis and Applications}, 21(4):1324--1342, 2000.

\bibitem{zhe2016distributed}
Shandian Zhe, Kai Zhang, Pengyuan Wang, Kuang-chih Lee, Zenglin Xu, Yuan Qi, and Zoubin Ghahramani.
\newblock Distributed flexible nonlinear tensor factorization.
\newblock {\em Advances in neural information processing systems}, 29, 2016.

\bibitem{song2020robust}
Guangjing Song, Michael~K Ng, and Xiongjun Zhang.
\newblock Robust tensor completion using transformed tensor singular value decomposition.
\newblock {\em Numerical Linear Algebra with Applications}, 27(3):e2299, 2020.

\end{thebibliography}

\appendix
\section{The variational posterior distribution of TT-cores at time $t$.}\label{appendix2}
\begin{equation}
\begin{aligned}
    &\log q^{(t)}_{\pmb{\mathscr{G}}_{k,j_{d},l}^{(d)}}(\pmb{\mathscr{G}}_{k,j_{d},l}^{(d)}) = \mathbb{E}_{q^{(t)}(\mathbf{\theta}\setminus \pmb{\mathscr{G}}_{k,j_{d},l}^{(d)})}\left[\log \tilde{p}^{(t)}(\mathbf{\theta})\right]+\operatorname{const}\\
    &= \mathbb{E}\left[-\frac{1}{2}{\nu_{\pmb{\mathscr{G}}_{k,j_{d},l}^{(d)}}^{(t-1)}}^{-1}\left(\pmb{\mathscr{G}}^{(d)}_{k,j_{d},l}-\mu_{\pmb{\mathscr{G}}_{k,j_{d},l}^{(d)}}^{(t-1)}\right)^{2}-\frac{\tau}{2}\sum_{\mathbf{j}\in\Omega_{t}^{j_{d}}}\left(x_{\mathbf{j}}-\prod_{d=1}^{D} \pmb{\mathscr{G}}^{(d)}_{j_{d}}\right)^{2}\right]+\operatorname{const}\\
    &=\mathbb{E}\left[-\frac{1}{2}{\nu_{\pmb{\mathscr{G}}_{k,j_{d},l}^{(d)}}^{(t-1)}}^{-1}{\pmb{\mathscr{G}}^{(d)}_{k,j_{d},l}}^{2}+\mu_{\pmb{\mathscr{G}}_{k,j_{d},l}^{(d)}}^{(t-1)}{\nu_{\pmb{\mathscr{G}}_{k,j_{d},l}^{(d)}}^{(t-1)}}^{-1}\pmb{\mathscr{G}}^{(d)}_{k,j_{d},l}-\frac{\tau}{2}\sum_{\mathbf{j}\in\Omega_{t}^{j_{d}}}\left(\prod_{d=1}^{D} \pmb{\mathscr{G}}^{(d)}_{j_{d}}\right)^{2}+\tau\sum_{\mathbf{j}\in\Omega_{t}^{j_{d}}}x_{\mathbf{j}}\prod_{d=1}^{D} \pmb{\mathscr{G}}^{(d)}_{j_{d}}\right]+\operatorname{const}\\
    &=\mathbb{E}\left[ -\frac{1}{2}{\nu_{\pmb{\mathscr{G}}_{k,j_{d},l}^{(d)}}^{(t-1)}}^{-1}{\pmb{\mathscr{G}}^{(d)}_{k,j_{d},l}}^{2}+\mu_{\pmb{\mathscr{G}}_{k,j_{d},l}^{(d)}}^{(t-1)}{\nu_{\pmb{\mathscr{G}}_{k,j_{d},l}^{(d)}}^{(t-1)}}^{-1}\pmb{\mathscr{G}}^{(d)}_{k,j_{d},l}-\frac{\tau}{2}\sum_{\mathbf{j}\in\Omega_{t}^{j_{d}}}\prod_{d=1}^{D}\left(\pmb{\mathscr{G}}^{(d)}_{j_{d}}\otimes \pmb{\mathscr{G}}^{(d)}_{j_{d}}\right)+\tau\sum_{\mathbf{j}\in\Omega_{t}^{j_{d}}}x_{\mathbf{j}}\prod_{d=1}^{D} \pmb{\mathscr{G}}^{(d)}_{j_{d}}\right]+\operatorname{const}\\
    &=\mathbb{E}\left[ -\frac{1}{2}{\nu_{\pmb{\mathscr{G}}_{k,j_{d},l}^{(d)}}^{(t-1)}}^{-1}{\pmb{\mathscr{G}}^{(d)}_{k,j_{d},l}}^{2}+\mu_{\pmb{\mathscr{G}}_{k,j_{d},l}^{(d)}}^{(t-1)}{\nu_{\pmb{\mathscr{G}}_{k,j_{d},l}^{(d)}}^{(t-1)}}^{-1}\pmb{\mathscr{G}}^{(d)}_{k,j_{d},l}-\frac{\tau}{2}\sum_{\mathbf{j}\in\Omega_{t}^{j_{d}}} b^{(<d)}\left(\pmb{\mathscr{G}}^{(d)}_{j_{d}}\otimes \pmb{\mathscr{G}}^{(d)}_{j_{d}}\right)b^{(>d)}+\tau \sum_{\mathbf{j}\in\Omega_{t}^{j_{d}}}x_{\mathbf{j}}e^{(<d)}\pmb{\mathscr{G}}^{(d)}_{j_{d}}e^{(>d)}\right]+\operatorname{const}\\
    &=\mathbb{E}\left[ -\frac{1}{2}{\nu_{\pmb{\mathscr{G}}_{k,j_{d},l}^{(d)}}^{(t-1)}}^{-1}{\pmb{\mathscr{G}}^{(d)}_{k,j_{d},l}}^{2}+\mu_{\pmb{\mathscr{G}}_{k,j_{d},l}^{(d)}}^{(t-1)}{\nu_{\pmb{\mathscr{G}}_{k,j_{d},l}^{(d)}}^{(t-1)}}^{-1}\pmb{\mathscr{G}}^{(d)}_{k,j_{d},l}-\frac{\tau}{2}{\pmb{\mathscr{G}}^{(d)}_{k,j_{d},l}}^{2}\sum_{\mathbf{j}\in\Omega_{t}^{j_{d}}}b^{(<d)}_{(k-1)r_{d-1}+k}b^{(>d)}_{(l-1)r_{d}+l}\right.\\
    & \left.-\tau\pmb{\mathscr{G}}^{(d)}_{k,j_{d},l} \sum_{\mathbf{j}\in\Omega_{t}^{j_{d}}}\left(\sum_{k^{\prime}=1}^{r_{d-1}}\sum_{l^{\prime}=1}^{r_{d}} b^{(<d)}_{(k-1)r_{d-1}+k^{\prime}}b^{(>d)}_{(l-1)r_{d}+l^{\prime}}\pmb{\mathscr{G}}^{(d)}_{k^{\prime},j_{d},l^{\prime}}- b^{(<d)}_{(k-1)r_{d-1}+k}b^{(>d)}_{(l-1)r_{d}+l}\pmb{\mathscr{G}}^{(d)}_{k,j_{d},l}\right)+\tau \pmb{\mathscr{G}}^{(d)}_{k,j_{d},l}\sum_{\mathbf{j}\in\Omega_{t}^{j_{d}}}x_{\mathbf{j}}e^{(<d)}_{k} e^{(>d)}_{l}\right]+\operatorname{const}\\
    &=\mathbb{E}\left[-\frac{1}{2}\left({\nu_{\pmb{\mathscr{G}}_{k,j_{d},l}^{(d)}}^{(t-1)}}^{-1}+\tau \sum_{\mathbf{j}\in\Omega_{t}^{j_{d}}}b^{(<d)}_{(k-1)r_{d-1}+k}b^{(>d)}_{(l-1)r_{d}+l}\right){\pmb{\mathscr{G}}^{(d)}_{k,j_{d},l}}^{2}+\Bigg(\mu_{\pmb{\mathscr{G}}_{k,j_{d},l}^{(d)}}^{(t-1)}{\nu_{\pmb{\mathscr{G}}_{k,j_{d},l}^{(d)}}^{(t-1)}}^{-1}\right.\\&
    \left.+\tau\sum_{\mathbf{j}\in\Omega_{t}^{j_{d}}}\left(x_{\mathbf{j}}e^{(<d)}_{k} e^{(>d)}_{l}-\sum_{k^{\prime}=1}^{r_{d-1}}\sum_{l^{\prime}=1}^{r_{d}} b^{(<d)}_{(k-1)r_{d-1}+k^{\prime}}b^{(>d)}_{(l-1)r_{d}+l^{\prime}}\pmb{\mathscr{G}}^{(d)}_{k^{\prime},j_{d},l^{\prime}}+ b^{(<d)}_{(k-1)r_{d-1}+k}b^{(>d)}_{(l-1)r_{d}+l}\pmb{\mathscr{G}}^{(d)}_{k,j_{d},l}\right)\Bigg)\pmb{\mathscr{G}}_{k,j_{d},l}^{(d)}\right] +\operatorname{const}\\
    &=-\frac{1}{2}\left({\nu_{\pmb{\mathscr{G}}_{k,j_{d},l}^{(d)}}^{(t-1)}}^{-1}+\mathbb{E}\left[\tau\right] \sum_{\mathbf{j}\in\Omega_{t}^{j_{d}}}\mathbb{E}\left[b^{(<d)}_{(k-1)r_{d-1}+k}\right]\mathbb{E}\left[b^{(>d)}_{(l-1)r_{d}+l}\right]\right){\pmb{\mathscr{G}}^{(d)}_{k,j_{d},l}}^{2}\\
    &+\left(\mu_{\pmb{\mathscr{G}}_{k,j_{d},l}^{(d)}}^{(t-1)}{\nu_{\pmb{\mathscr{G}}_{k,j_{d},l}^{(d)}}^{(t-1)}}^{-1}+\mathbb{E}\left[\tau\right]\sum_{\mathbf{j}\in\Omega_{t}^{j_{d}}}\Bigg(x_{\mathbf{j}}\mathbb{E}\left[e^{(<d)}_{k}\right] \mathbb{E}\left[e^{(>d)}_{l}\right]\right.\\
    &\left.-\sum_{k^{\prime}=1}^{r_{d-1}}\sum_{l^{\prime}=1}^{r_{d}} \mathbb{E}\left[b^{(<d)}_{(k-1)r_{d-1}+k^{\prime}}\right]\mathbb{E}\left[b^{(>d)}_{(l-1)r_{d}+l^{\prime}}\right]\mu_{\pmb{\mathscr{G}}^{(d)}_{k^{\prime},j_{d},l^{\prime}}}^{(t-1)}+ 
  \mathbb{E}\left[b^{(<d)}_{(k-1)r_{d-1}+k}\right]\mathbb{E}\left[b^{(>d)}_{(l-1)r_{d}+l}\right]\mu_{\pmb{\mathscr{G}}^{(d)}_{k,j_{d},l}}^{(t-1)}
  \Bigg)\right)\pmb{\mathscr{G}}_{k,j_{d},l}^{(d)}+\operatorname{const}\\
    \end{aligned}
\end{equation}
Thus, $q^{(t)}_{\pmb{\mathscr{G}}_{k,j_{d},l}^{(d)}}(\pmb{\mathscr{G}}_{k,j_{d},l}^{(d)})$ is also a Gaussian distribution
\begin{equation}
    q^{(t)}_{\pmb{\mathscr{G}}_{k,j_{d},l}^{(d)}}(\pmb{\mathscr{G}}_{k,j_{d},l}^{(d)}) = \mathcal{N}(\pmb{\mathscr{G}}_{k,j_{d},l}^{(d)}|\mu_{\pmb{\mathscr{G}}_{k,j_{d},l}^{(d)}}^{(t)},\nu_{\pmb{\mathscr{G}}_{k,j_{d},l}^{(d)}}^{(t)}),
\end{equation}
where
\begin{equation}
\begin{aligned}  \nu_{\pmb{\mathscr{G}}_{k,j_{d},l}^{(d)}}^{(t)} &= \left({\nu_{\pmb{\mathscr{G}}_{k,j_{d},l}^{(d)}}^{(t-1)}}^{-1}+\mathbb{E}\left[\tau\right] \sum_{\mathbf{j}\in\Omega_{t}^{j_{d}}}\mathbb{E}\left[b^{(<d)}_{(k-1)r_{d-1}+k}\right]\mathbb{E}\left[b^{(>d)}_{(l-1)r_{d}+l}\right]\right)^{-1},\\
\mu_{\pmb{\mathscr{G}}_{k,j_{d},l}^{(d)}}^{(t)}&=\nu_{\pmb{\mathscr{G}}_{k,j_{d},l}^{(d)}}^{(t)}\mu_{\pmb{\mathscr{G}}_{k,j_{d},l}^{(d)}}^{(t-1)}{\nu_{\pmb{\mathscr{G}}_{k,j_{d},l}^{(d)}}^{(t-1)}}^{-1}+\nu_{\pmb{\mathscr{G}}_{k,j_{d},l}^{(d)}}^{(t)}\mathbb{E}\left[\tau\right]\sum_{\mathbf{j}\in\Omega_{t}^{j_{d}}}\Bigg(x_{\mathbf{j}}\mathbb{E}\left[e^{(<d)}_{k}\right] \mathbb{E}\left[e^{(>d)}_{l}\right] \\&
-\sum_{k^{\prime}=1}^{r_{d-1}}\sum_{l^{\prime}=1}^{r_{d}} \mathbb{E}\left[b^{(<d)}_{(k-1)r_{d-1}+k^{\prime}}\right]\mathbb{E}\left[b^{(>d)}_{(l-1)r_{d}+l^{\prime}}\right]\mu_{\pmb{\mathscr{G}}^{(d)}_{k^{\prime},j_{d},l^{\prime}}}^{(t-1)}+ 
  \mathbb{E}\left[b^{(<d)}_{(k-1)r_{d-1}+k}\right]\mathbb{E}\left[b^{(>d)}_{(l-1)r_{d}+l}\right]\mu_{\pmb{\mathscr{G}}^{(d)}_{k,j_{d},l}}^{(t-1)}\Bigg).
\end{aligned}
\end{equation}

\section{The variational posterior distribution of the noise precision at time $t$.}\label{appendix3}
\begin{equation}
\begin{aligned}
    &\log q^{(t)}_{\tau}(\tau) = \mathbb{E}_{q^{(t)}(\mathbf{\theta}\setminus \tau)}\left[\log \tilde{p}^{(t)}(\mathbf{\theta})\right]+\operatorname{const}\\
    &=\mathbb{E}\left[ (\alpha^{(t-1)}-1)\log \tau-\beta^{(t-1)}\tau-\frac{\tau}{2}\sum_{\mathbf{j}\in\Omega_{t}}\left((x_{\mathbf{j}}-\prod_{d=1}^{D} \pmb{\mathscr{G}}^{(d)}_{j_{d}}\right)^{2}+ \frac{|\Omega_{t}|}{2}\log \tau\right]+\operatorname{const}\\
    &=\mathbb{E}\left[ (\alpha^{(t-1)}+\frac{|\Omega_{t}|}{2}-1)\log \tau-\left(\beta^{(t-1)}+\frac{1}{2}\sum_{\mathbf{j}\in\Omega_{t}}\left(x_{\mathbf{j}}-\prod_{d=1}^{D} \pmb{\mathscr{G}}^{(d)}_{j_{d}}\right)^{2}\right)\tau\right]+\operatorname{const}\\
    &= (\alpha^{(t-1)}+\frac{|\Omega_{t}|}{2}-1)\log \tau-\left(\beta^{(t-1)}+\frac{1}{2}\mathbb{E}\left[\sum_{\mathbf{j}\in\Omega_{t}}\left(x_{\mathbf{j}}-\prod_{d=1}^{D} \pmb{\mathscr{G}}^{(d)}_{j_{d}}\right)^{2}\right]\right)\tau+\operatorname{const}\\
    \end{aligned}
\end{equation}
Thus, $q^{(t)}_{\tau}(\tau)$ is also a Gamma distribution
\begin{equation} 
q^{(t)}_{\tau}(\tau) = \operatorname{Gam}(\tau|\alpha^{(t)},\beta^{(t)}),
\end{equation}
and the posterior hyperparameters can be updated by
\begin{equation}
\begin{aligned}
    \alpha^{(t)} &=\alpha^{(t-1)}+ \frac{|\Omega_{t}|}{2},\\
    \beta^{(t)} &=\beta^{(t-1)}+\frac{1}{2}\mathbb{E}\left[\sum_{\mathbf{j}\in\Omega_{t}}\left(x_{\mathbf{j}}-\prod_{d=1}^{D} \pmb{\mathscr{G}}^{(d)}_{j_{d}}\right)^{2}\right].
    \end{aligned}
\end{equation}
\section{The log of $\tilde{p}^{(t)}(\mathbf{\theta})$}
\label{appendix1}
Since $\tilde{p}^{(t)}(\mathbf{\theta}) = q^{(t-1)}(\mathbf{\theta})p(x_{\mathbf{j}\in\Omega_{t}}|\theta)$, where $\mathbf{\theta} = \{\{\pmb{\mathscr{G}}^{(d)}\}_{d=1}^{D},\tau\}$, the log of  $\tilde{p}^{(t)}(\mathbf{\theta})$ can be written as
\begin{equation}
\begin{aligned}
     \log \tilde{p}^{(t)}(\mathbf{\theta})&=\log\left( q^{(t-1)}(\mathbf{\theta})p(B_{t}|\mathbf{\theta})\right) \\
     &= \log q^{(t-1)}_{\tau}(\tau)+\sum_{d=1}^{D}\sum_{k=1}^{r_{d-1}} \sum_{l=1}^{r_{d}} \sum_{j_{d}=1}^{N_{d}} \log q^{(t-1)}_{\pmb{\mathscr{G}}^{(d)}_{k,j_{d},l}}\left(\pmb{\mathscr{G}}^{(d)}_{k,j_{d},l}\right)\log p(x_{\mathbf{j}\in\Omega_{t}}|\{\pmb{\mathscr{G}}^{(d)}\}_{d=1}^{D},\tau)   \\
     &= (\alpha-1)\log \tau-\beta\tau- \sum_{d=1}^{D}\sum_{k=1}^{N_{d-1}}\sum_{l=1}^{N_{d}}\sum_{j_{d}=1}^{N_{d}}\frac{\left(\pmb{\mathscr{G}}^{(d)}_{k,j_{d},l}-\mu_{\pmb{\mathscr{G}}_{k,j_{d},l}^{(d)}}^{(t-1)}\right)^{2}}{2\nu_{\pmb{\mathscr{G}}_{k,j_{d},l}^{(d)}}^{(t-1)}}\\
     &-\frac{\tau}{2}\sum_{\mathbf{j}\in\Omega_{t}}\left(x_{\mathbf{j}}-\prod_{d=1}^{D} \pmb{\mathscr{G}}^{(d)}_{j_{d}}\right)^{2}+ \frac{|\Omega_{t}|}{2}\log \tau.
\end{aligned}
\end{equation}
\end{document}